\def\eqref#1{equation~\ref{#1}}
\def\1{\bm{1}}
\DeclareMathAlphabet{\mathsfit}{\encodingdefault}{\sfdefault}{m}{sl}
\SetMathAlphabet{\mathsfit}{bold}{\encodingdefault}{\sfdefault}{bx}{n}
\newcommand{\R}{\mathbb{R}}
\newcommand{\bE}{\mathbb{E}}
\newcommand{\bF}{\mathbb{F}}
\newcommand{\bP}{\mathbb{P}}
\newcommand{\bR}{\mathbb{R}}
\newcommand{\cE}{\mathcal{E}}
\newcommand{\cN}{\mathcal{N}}
\newcommand{\cO}{\mathcal{O}}
\newcommand{\bomega}{\bar{\omega}}
\newcommand{\bh}{\bar{h}}
\newcommand{\bSigma}{\bar{\Sigma}}
\tikzset{
  thick starred line/.style={
    thick,
    postaction={
      decorate,
      decoration={
        markings,
        mark=between positions 0 and 1 step 6pt with {\node{$\ast$};}
      }
    }
  }
}
\tikzset{
  asterisk dotted line/.style={
    decorate,
    decoration={
      markings,
      mark=between positions 0 and 1 step 6pt with {\node{$\ast$};}
    }
  }
}
\definecolor{maroon}{RGB}{190,0,42}
\newtheorem{proposition}{Proposition}
\newtheorem{theorem}{Theorem}
\newtheorem{assumption}{Assumption}
\newtheorem{corollary}{Corollary}
\newtheorem{remark}{Remark}
\newtheorem{lemma}{Lemma}
\title{Convergence of Stochastic Gradient Langevin Dynamics in the Lazy Training Regime}
\author{\name Noah Oberweis \email oberweis@mathc.rwth-aachen.de \\
      \addr Department of Mathematics\\
      RWTH Aachen University\\
      Aachen, Germany
      \AND
      \name Semih Cayci \email cayci@mathc.rwth-aachen.de \\
      \addr Department of Mathematics\\
      RWTH Aachen University\\
      Aachen, Germany
      }
\begin{document}

\maketitle

\begin{abstract}
Continuous-time models provide important insights into the training dynamics of optimization algorithms in deep learning. In this work, we establish a non-asymptotic convergence analysis of stochastic gradient Langevin dynamics (SGLD), which is an Itô stochastic differential equation (SDE) approximation of stochastic gradient descent in continuous time, in the lazy training regime. We show that, under regularity conditions on the Hessian of the loss function, SGLD with multiplicative and state-dependent noise (i) yields a non-degenerate kernel throughout the training process with high probability, and (ii) achieves exponential convergence to the empirical risk minimizer in expectation, and we establish finite-time and finite-width bounds on the optimality gap. We corroborate our theoretical findings with numerical examples in the regression setting.

\end{abstract}
\section{Introduction}
Stochastic gradient descent (SGD) has been the main workhorse of deep learning due to its low computational complexity and effectiveness. 
The existing theoretical analyses of first-order optimization methods in deep learning predominantly study the deterministic (full-batch) gradient descent, which neglect the impact of inherent stochasticity of SGD in training neural networks. Although there exist recent works that investigate stochastic dynamics (see, for example, \cite{lugosi2024convergencecontinuoustimestochasticgradient}), explicit finite-time convergence rates for the stochastic first-order methods under realistic and interpretable conditions still remain elusive, which motivates our work.

In this work, we establish theoretical guarantees for stochastic gradient Langevin dynamics (SGLD), a continuous-time approximation of SGD, in the lazy training regime. One important application of our theoretical analysis is finite-time and finite-width bounds on the training loss for deep feedforward neural networks trained by SGLD in a supervised learning setting. The main contributions of our work include the following:
\begin{itemize}
    \item \textbf{Exponential convergence rates in expectation:} By leveraging tools from stochastic calculus, we derive a stochastic Grönwall-type inequality for SDEs in the lazy training regime, which establishes an exponential decay rate for the expected optimality gap as long as the parameters are within a certain neighborhood of the initialization. To that end, we explicitly characterize the impact of the curvature of the loss function in the parameter space and the output scaling factor on the effective noise during the training process. The resulting scaling law describes sufficient conditions on the curvature of a global minimum to prevent SGLD from escaping it, which is one of the key outcomes of our analysis.
    
    \item \textbf{High probability bounds on the conservation of the lazy training regime:} A key quantity in the lazy training analysis is the first-exit time at which the parameters leave a certain neighborhood of their random initialization, which lower bounds the duration of descent throughout the optimization steps. 
    Leveraging our stochastic error analysis, we derive high-probability bounds on this first-exit time, and show that, for sufficiently large output scaling, the parameters remain in this neighborhood indefinitely with high probability, thereby completing the convergence argument in the stochastic setting.

    \item \textbf{Applications in deep learning:} We apply our theoretical analysis to derive explicit non-asymptotic (finite-time and finite-width) bounds on training loss for deep neural networks trained by SGLD in the lazy training regime. To the best of our knowledge, this is the first analysis of stochastic gradient Langevin dynamics in this regime.

\end{itemize}

Our work builds on and extends the deterministic lazy training analysis in \citet{ChizatBachlazyTraining}, which analyzes the convergence of gradient flow for overparameterized neural networks in the lazy training (or the so-called kernel) regime. In \cite{ChizatBachlazyTraining}, it was demonstrated that the lazy training phenomenon arises from an appropriately chosen output scaling factor, which ensures that the neural network parameters remain in a small enough neighborhood of their random initialization. We note that \citet{ChizatBachlazyTraining} considers deterministic (full-batch) gradient flow in the lazy training regime, and does not address the stochasticity that stems from stochastic subsampling in SGD, which is crucial for learning with large datasets. In this work, we extend this framework by analyzing a stochastic continuous-time approximation of the SGD.

\subsection{Related Work}
\textbf{Convergence of overparameterized neural networks in the kernel regime.} Overparameterized neural networks have been shown to achieve interpolation, which implies zero training error when they are trained with first-order methods despite the highly non-convex optimization landscape. At the same time, they achieve impressive generalization performance even with noisy data \citep{belkin2019reconciling, bartlett2020benign, zhang2021understanding}. Additionally, works such as \cite{oymak2019moderateoverparameterizationglobalconvergence} have demonstrated that the overparameterized regime can be reached with fewer parameters than previously suggested by the theory. The theoretical explanation of this phenomenon has been a focal point of interest. It was shown in \cite{jacot2020neuraltangentkernelconvergence, du2018gradient, li2018learning} that randomly-initialized overparameterized neural networks, trained with gradient descent, achieve global optimality without moving away from their random initialization, which is known as the lazy training or kernel regime. The main inspiration for this work is \citet{ChizatBachlazyTraining}, where lazy training is first analyzed as a consequence of an artificial scaling factor. The existing works in the lazy training regime predominantly analyze overparameterized neural networks trained with (full-batch) gradient descent, and they do not extend to stochastic gradient descent, which is the core focus of our work. Additionally, it is essential to point out that, contrary to other works, our primary goal is not the analysis of linearized dynamics defined by $\Bar{h}(\omega):=h(\omega_0)+Dh(\omega_0)(\omega-\omega_0)$. It can be shown that gradient flow stays arbitrarily close to the linearized dynamics for sufficiently large $\alpha$. We will use very similar techniques to prove an exponential convergence rate of the empirical error for a sufficiently large $\alpha$ without looking into the case where $\alpha\rightarrow\infty$.

\textbf{Convergence of stochastic gradient descent for neural networks.} Dynamics of stochastic gradient descent have been investigated in \cite{lugosi2024convergencecontinuoustimestochasticgradient} for deep linear neural networks, which shows that a global optimum is achieved under some regularity assumptions on the loss landscape with a convergence rate that has an implicit dependency on the problem parameters. In our work, inspired by the analysis in \cite{lugosi2024convergencecontinuoustimestochasticgradient}, we analyze the stochastic gradient dynamics for training \emph{non-linear} neural networks, and establish exponential convergence to the global optimum with an explicit and interpretable convergence rate under easily verifiable regularity assumptions.



\textbf{Modeling of discrete training algorithms by SDEs:} Continuous-time modeling of discrete-time stochastic gradient descent by using SDEs has recently been investigated in many works, as the SDE model provides essential insights into the discrete stochastic gradient descent training algorithm. In this work, we consider an SDE-model to approximate stochastic gradient descent dynamics, inspired by \cite{EeSGDtoSDE}, which proved the viability of using stochastic gradient Langevin dynamics to model mini-batch stochastic gradient descent. Under certain assumptions, \cite{EeSGDtoSDE} shows that mini-batch stochastic gradient descent is a weak order-one approximation of the stochastic gradient Langevin dynamics \eqref{eq:SDE} (see Remark \ref{remark:Modeling}).\\
Our work is closely related to \cite{ChizatBachlazyTraining} and \cite{lugosi2024convergencecontinuoustimestochasticgradient}. However, there are many substantial differences between prior work and our contributions. For instance, \cite{ChizatBachlazyTraining} does not consider the stochastic term needed to model stochastic learning algorithms.
Compared to \cite{lugosi2024convergencecontinuoustimestochasticgradient}, we investigate the dynamics in the lazy training regime, which allows us to formulate less restrictive and easily verifiable assumptions in a machine learning context. An indepth discussion on the differences of our contributions and prior works is presented in Appendix \ref{appendix:comparison}.

We limit our results to continuous models (as defined in the following section), although training typically employs stochastic gradient descent or other iterative methods in most practical applications. In addition to the multiple technical advantages arising from the continuous approximation of real (discrete) dynamics, such as the possibility of utilizing many tools from Itô calculus and the independence of the step size of the algorithm, this choice has yielded significant insights that apply to discrete models. One such instance is presented in \citet{malladi2024sdesscalingrulesadaptive}, where the authors derive a continuous model for the adaptive gradient algorithms, which is then used to motivate the optimality of the square root scaling rule for adjusting hyperparameters as batch sizes increase.

\subsection{Notation}
$Id$ denotes the identity matrix. $A\preceq B$ indicates that $B-A$ is positive-definite. $\|x\|_2=\sqrt{x^Tx}$ denotes the 2-norm of $x\in\R$. We use $\lambda_{min}(A)$ for the smallest eigenvalue of a Matrix $A$. For a positive-definite matrix $A$, we define the vector norm $\|x\|_A^2:=x^TAx$. Additionally, $\|A\|_F$ is the Frobenius norm of $A$ and $B_r(x_0)$ is the open ball of radius $r$ around the $x_0$, in the Euclidean norm. We denote the modulus of Lipschitz continuity of a function $f$ as $\mathrm{Lip}(f)$. The notation $Dh$ denotes the Jacobian of $h$. Similarly, $\nabla h$ represents the gradient of $h$ and $\nabla^2 h$ represents its Hessian. For a function $f:\bR^n\rightarrow \bR$, it holds that $\nabla^Tf=Df$. We add an index to $\nabla$ to indicate, that the gradient is only taken with respect to one of the arguments. $\nabla_h$ can be seen as the gradient in the Hilbert space $\bF$. However, this can be replaced by the standard gradient with respect to the values that $h\in\bF$ takes on the finite set $\{x_i\}_{i=1}^n$. Lastly, $\mathds{1}_A$ denotes the characteristic function, which is equal to $1$ if the logical expression $A$ holds true and $0$ otherwise.
\section{Problem Setup and Stochastic Gradient Langevin Dynamics}
\subsection{Problem Setting}
In this work, we consider a smooth parametric predictor $h:\bR^p\rightarrow \bF$, where the parameter space is $\bR^p$ and the output space is a Hilbert space $\bF$ (equipped with the norm $\|\cdot \|_\bF$). In addition, we assume that the empirical risk $R:\bF\rightarrow \R_+$ and the loss function $\ell:\R^d\times\bF\rightarrow \R$ are smooth. Both functions are related by $\bE_x[\ell(x,h(\omega))]=R(h(\omega))$, where $\bE_x$ denotes the expected value with regards to some probability distribution $P$ from which the datapoints $x\sim P$ are sampled independently. The main objective in this paper is to solve the following problem:
$$\mathrm{Find} \ \omega^\star \ \mathrm{s.t.} \ R(h(\omega^\star))<\epsilon,$$
for some $\epsilon>0$. Empirical risk minimization in supervised learning is a special case of the formulation above, where $P$ is a discrete probability measure on a given set of data points $\{(x_i,y_i)\}_{i=1}^n$ and an unknown ground truth $h^\star\in\bF$, that connects $x_i$ and $y_i$ by $h^\star(x_i)=y_i$. In empirical risk minimization, a popular choice for the loss function is given by $\ell(x,h)=(h(x)-y)^2$, where $y=h^\star(x)$. For further details about the supervised learning formulation, see Section \ref{section:nr}, where we apply the results to train neural networks with smooth activations in a supervised learning setting.


\subsection{Lazy Training as Stochastic Langevin Dynamics}
We consider the scaled stochastic gradient Langevin dynamics defined by the It\^o SDE
\begin{equation}\label{eq:SDE}
    d\omega_t=-\frac{1}{\alpha}D^Th(\omega_t)\nabla_h R(\alpha h(\omega_t))dt+\frac{\sqrt{\eta_\alpha}}{\alpha}(\Sigma_\alpha(\omega_t))^{\frac{1}{2}}dW_t,
\end{equation}
where $W_t$ is a standard Brownian motion in $\bR^p$, $\Sigma_\alpha(\omega)$ is the covariance matrix of $\nabla_h\ell(\cdot,\alpha h(\omega))$ and $\eta_\alpha,\alpha\in\R_+$. We adopt a random initialization $\omega_0$ such that $h(\omega_0)=0$ holds almost surely. For neural networks, this can be achieved by using the symmetric initialization scheme \citep{bai2020linearizationquadratichigherorderapproximation, ChizatBachlazyTraining, cayci2025convergence}.

Here, $\sigma_\alpha(\omega):=(\Sigma_\alpha(\omega))^\frac{1}{2}$ denotes the matrix square root $\sigma_\alpha^T(\omega) \sigma_\alpha(\omega)=\Sigma_\alpha(\omega)$, which is well-defined since $\Sigma_\alpha(\omega)$ is symmetric positive definite. The noise scaling factor $\eta_\alpha$ may depend on the output scaling factor $\alpha$ in certain cases (see Appendix \ref{appendix:snn}).

\begin{remark}\label{remark:Modeling}
For the empirical risk minimization problem where the risk function is defined by $R(h(\omega))=\frac{1}{m}\sum_{i=1}^m \ell(x_i,h(\omega))$, the SDE in \eqref{eq:SDE} is a continuous-time approximation of the stochastic gradient descent (SGD)
    \begin{equation}\label{eq:SGD}
        \omega_{k+1}=\omega_k-\frac{\eta_\alpha}{\alpha}D^Th(\omega_k)\nabla_h R(\alpha h(\omega_k))+\frac{\sqrt{\eta_\alpha}}{\alpha}V_k,
    \end{equation}
    where $V_k:=\sqrt{\eta_\alpha}(D^Th(\omega_k)\nabla_h R(\alpha h(\omega_k))-D^Th(\omega_k)\nabla_h l(x,\alpha h(\omega_k)))$ and $x\sim \mathrm{Unif}\{x_1,\ldots, x_n\}$ is a sample from the distribution on the input space. In that case, $V_k|x_k$ is a random vector with mean $0$ and covariance matrix $\eta_\alpha\Sigma_\alpha(x_k)$. In \citet{EeSGDtoSDE} it is shown that the stochastic continuous-time model that we consider in \eqref{eq:SGD} is an order-one weak approximation of the SDE \eqref{eq:SDE}. There are other stochastic models to approximate SGD in continuous time \citep{EeSGDtoSDE, simsekli2019tailindexanalysisstochasticgradient, gurbuzbalaban2021heavytailphenomenonsgd}.
    
\end{remark}

\section{Convergence of Stochastic Gradient Langevin Dynamics in the Lazy Training Regime}
In this section, we show the convergence in expectation of the loss function under suitable regularity assumptions. Contrary to other SDE models, \eqref{eq:SDE} contains a noise term that vanishes at a minimizer. Due to the introduced regularity of the loss $\ell$ in the following Assumption \ref{assumption:l} and curvature constraint from Assumption \ref{assumption:combined}, we can effectively keep the parameters from escaping a minima. 
\begin{assumption}\label{assumption:l} We make the following assumptions.
    \begin{itemize}
        \item Regularity of the loss function: $\ell:\R^n\times\bF\rightarrow \R$ is Lipschitz-smooth
    $$||\nabla_h \ell(x,h_1)-\nabla_h \ell(x,h_2)||_2<\mathrm{Lip}(\nabla_h \ell) ||h_1-h_2||_\bF,$$
    and $m$-strongly convex
    $$(\nabla_h \ell(x,h_1)-\nabla_h \ell(x,h_2))^T(h_1-h_2)\geq m ||h_1-h_2||_\bF.$$
    \item Regularity of the model: The derivative of the model $h:\R^p\rightarrow \bF$ is $\mathrm{Lip}(Dh)$-Lipschitz continuous, and the neural tangent kernel at initialization is strictly positive definite 
    $\lambda_{\min}(Dh(w_0)D^\top h(w_0)) > 0$.
    \end{itemize}
\end{assumption}
Assumption \ref{assumption:l} holds for many loss functions, with the most common example being the squared error (see Appendix \ref{appendix:snn}). The smoothness assumption on $h$ holds for shallow neural networks (see Corollary \ref{corollary:snn}). For deep neural networks, we will present an alternative that circumvents the smoothness of $h$ (see Corollary \ref{corollary:dnn} and Appendix \ref{appendix:dnn}). The empirical neural tangent kernel is strictly positive definite for overparameterized neural networks with high probability over the random initialization under reasonable data regularity conditions \citep{du2018gradient, du2019gradientdescentfindsglobal, banerjee2023neural}.\\
The following assumption on the Hessian of the risk function is critical for the analysis in the stochastic setting.
\begin{assumption}\label{assumption:combined}
We assume that $Tr\left(\frac{\sigma_\alpha(\omega)^T\nabla_\omega^2R(\alpha h(\omega))\sigma_\alpha(\omega)}{\Bar{R}(\alpha h(\omega))}\right)\leq \frac{2\alpha^2\lambda^2m}{\eta_\alpha}$, for all $\omega \in B_{r}(\omega_0)$, where $r$ is the radius that will be discussed later in this section (before Theorem \ref{th:ybound}).
\end{assumption}
Assumption \ref{assumption:combined} automatically holds for sufficiently over parametrized shallow and deep neural networks under realistic assumptions. In Appendix \ref{appendix:snn} and \ref{appendix:dnn}, we provide the proof for shallow and deep neural networks with smooth activation functions.

\begin{assumption}\label{assumption:interchangeability}
    Assume that the gradient and the expectation of the loss interchange, i.e., $\nabla_h R(h) = \mathbb{E}[\nabla_h \ell(x, h)]$.
\end{assumption}
For the empirical risk minimization in supervised learning, Assumption \ref{assumption:interchangeability} automatically holds due to the uniform sampling distribution over a finite training set.

\begin{lemma}[Regularity of $R$]\label{lemma:Rproperties}
    Under Assumption \ref{assumption:l} and Assumption \ref{assumption:interchangeability}, $R$ is $\mathrm{Lip}(\nabla_h \ell)$-smooth and m-strongly convex.
\end{lemma}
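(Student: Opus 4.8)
The plan is to push both regularity properties of $\ell$ in its output argument through the expectation over the data distribution, relying on the identity $\nabla R(h)=\E_x[\nabla\ell(x,h)]$ granted by Assumption \ref{assumption:interchangeability}. That identity is precisely what lets me avoid any differentiation-under-the-integral argument: once the gradient of $R$ is known to be the expectation of the gradient of $\ell$, both claims reduce to applying the two pointwise bounds of Assumption \ref{assumption:l} inside an expectation. Throughout, $\nabla$ is taken with respect to the Hilbert-space output $h\in\bF$, and the product $(\cdot)^T(\cdot)$ is understood as the inner product on $\bF$.

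For smoothness, I would write the gradient difference as a single expectation and pass the norm inside it:
\[
\|\nabla R(h_1)-\nabla R(h_2)\|_2=\big\|\E_x\big[\nabla\ell(x,h_1)-\nabla\ell(x,h_2)\big]\big\|_2\le \E_x\big[\|\nabla\ell(x,h_1)-\nabla\ell(x,h_2)\|_2\big]\le \mathrm{Lip}(\nabla\ell)\,\|h_1-h_2\|_\bF,
\]
where the first inequality is Jensen's inequality for the (Bochner) expectation of a vector-valued random variable and the second is the pointwise Lipschitz bound of Assumption \ref{assumption:l} together with monotonicity of the expectation. This yields exactly the $\mathrm{Lip}(\nabla\ell)$-smoothness of $R$.

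For strong convexity, I would use linearity of the inner product to pull the deterministic vector $h_1-h_2$ through the expectation:
\[
(\nabla R(h_1)-\nabla R(h_2))^T(h_1-h_2)=\E_x\big[(\nabla\ell(x,h_1)-\nabla\ell(x,h_2))^T(h_1-h_2)\big]\ge \E_x\big[m\|h_1-h_2\|_\bF\big]=m\|h_1-h_2\|_\bF,
\]
where the inequality applies the pointwise $m$-strong-convexity bound of Assumption \ref{assumption:l} and then uses that the integrand no longer depends on $x$. This establishes the $m$-strong convexity of $R$.

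Since both steps are essentially term-by-term, the only point that warrants care is the bookkeeping of the Hilbert-space gradient: verifying that $\|\E_x[\cdot]\|\le\E_x[\|\cdot\|]$ holds for the $\bF$-valued random element $\nabla\ell(x,\cdot)$ (a standard consequence of Jensen's inequality and convexity of the norm), and that the inner product against the fixed element $h_1-h_2$ commutes with the expectation by linearity. No other obstacle arises, so I expect this lemma to follow directly once Assumption \ref{assumption:interchangeability} is invoked.
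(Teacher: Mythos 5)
Your proposal is correct and follows essentially the same route as the paper's own proof: invoke Assumption \ref{assumption:interchangeability} to write $\nabla R(h)=\E_x[\nabla\ell(x,h)]$, then for smoothness pass the norm inside the expectation (Jensen) and apply the pointwise Lipschitz bound, and for strong convexity use linearity of the expectation/inner product and apply the pointwise strong-convexity bound. Your added remarks on the Bochner-space justification of $\|\E_x[\cdot]\|\le\E_x[\|\cdot\|]$ only make explicit what the paper leaves implicit.
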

\begin{proof}[Proof of Lemma \ref{lemma:Rproperties}]
    Using the convexity from Assumption \ref{assumption:l}, we have
    \begin{align*}
        \|\nabla_h R(h_1)-\nabla_h R(h_2)\|&=\|\bE_x[\nabla_h\ell(x,h_1)-\nabla_h\ell(x,h_2)]\|_2\\
        &\leq\bE_x[\|\nabla_h\ell(x,h_1)-\nabla_h\ell(x,h_2))\|_2]\\
        &\leq \bE_x[\mathrm{Lip}(\nabla_h \ell)\|h_1-h_2\|_\bF]\\
        &=\mathrm{Lip}(\nabla_h \ell)\|h_1-h_2\|_\bF.
    \end{align*}
    Similarly, we can use the $m$-strong convexity from Assumption \ref{assumption:l} to obtain
    \begin{align*}
        (\nabla_h R(h_1)-\nabla_h R(h_2))^T(h_1-h_2)&=\bE_x[(\nabla_h\ell(x,h_1)-\nabla_h\ell(x,h_2))^T](h_1-h_2)\\
        &=\bE_x[(\nabla_h\ell(x,h_1)-\nabla_h\ell(x,h_2))^T(h_1-h_2)]\\
        &\geq \bE_x[m\|h_1-h_2\|_\bF]\\
        &= m\|h_1-h_2\|_\bF,
    \end{align*}
    which concludes the proof.
\end{proof}
\begin{remark}\label{remark:globalmin}
    At this point, it is essential to point out that Assumption \ref{assumption:l} is already sufficient for the existence of a global minimizer of $R$ in the supervised learning setting with a finite set of data points. Since we have shown in Lemma \ref{lemma:Rproperties} that $h\mapsto R(h)$ is $m$-strong convex as a function on the prediction space. Since the prediction space can be characterized by a final dimensional set if $R$ only depends on a finite number of values of $h$, we already know that there exists a global minimizer $h^\star\in\bF$. However, we do not assume the existence of $\omega^\star\in\bR^p$ such that $h(\omega^\star)=h^\star$ (which is not needed for the theoretical findings of this paper). The existence of $\omega^\star$ is generally not guaranteed, since the mapping $\omega\mapsto R(h(\omega))$ does not have to be strongly convex. Later in the numerical experiments as well as the proofs of Corollaries \ref{corollary:snn} and \ref{corollary:dnn}, we assumed the existence of $\omega^\star$, since it holds for the teacher-student setting, and to simplify the calculations.
\end{remark}

As a key feature of analysis in the lazy training regime, the weights $\omega_t$ do not deviate too far from their initial value, which implies that $\lambda Id\preceq Dh(\omega_t)D^Th(\omega_t)$ for some $\lambda > 0$. Many publications have focused on establishing strictly positive lower bounds for this value for deterministic (full-batch) gradient flow (see, for instance, \cite{du2018gradient, ChizatBachlazyTraining}); however, we are not aware of any results that guarantee the positivity of $\lambda=\lambda_{min}(Dh(\omega_t)D^Th(\omega_t))$ for SGD or SGLD. We will solve this problem by introducing a stopping time, which is the smallest time at which this condition no longer holds, and prove in Corollary \ref{corollary:ybound} that this stopping time will be $\infty$ with high probability for $\alpha$ sufficiently large. The minimum eigenvalue of the NTK is strictly positive, at least as long as $||\omega_t-\omega_0||_2\leq r:=\frac{\lambda}{\mathrm{Lip}(Dh)}$. Therefore, we will introduce the random stopping time
$$\tau:=\inf\{t\geq 0:||\omega_t-\omega_0||_2> r\},$$
denoting the first time $\lambda Id \preceq Dh(\omega_t)D^Th(\omega_t)$ is degenerate. In addition, the inequality in Assumption \ref{assumption:combined} also holds at least for $t\leq \tau$. Working with this quantity and proving $\tau=\infty$ with high probability is a major challenge in our analysis. In addition, we define $\min\{T,\tau\}=:\Tilde{T}$, where $T>0$ is some arbitrary but fixed runtime of the algorithm. Let $t<\Tilde{T}$ in the following.
\begin{theorem}[Convergence of stochastic gradient Langevin dynamics]\label{th:ybound}
    Let $h^\star$ be the global minimizer of $R$. Define the optimality gap as $\Bar{R}(h):=R(h)-R(h^\star)>0$. Then, under Assumptions \ref{assumption:l}-\ref{assumption:interchangeability}, it holds that 
    \begin{equation}\label{eq:ybound}
        \bE_\omega[\Bar{R}(\alpha h(\omega_t))]\leq \bE_\omega[\Bar{R}(\alpha h(\omega_0))]\exp(-m\lambda^2 t),
    \end{equation}
    for any $t < \tau$.
\end{theorem}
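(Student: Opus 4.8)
The plan is to treat $Y_t := \bar{R}(\alpha h(\omega_t)) = R(\alpha h(\omega_t)) - R(h^\star)$ as a Lyapunov functional, apply It\^o's formula along \eqref{eq:SDE} to obtain a decomposition $dY_t = (\text{drift})\,dt + dM_t$ with $M_t$ a local martingale, and then extract a drift bound of the form $dY_t \le -2m\lambda^2 Y_t\,dt + dM_t$ valid up to the stopping time $\tau$. A stochastic Grönwall / integrating-factor argument (applying It\^o to $e^{2m\lambda^2 t}Y_t$) would then deliver \eqref{eq:ybound}. Writing $b(\omega) = -\tfrac{1}{\alpha}D^Th(\omega)\nabla R(\alpha h(\omega))$ for the drift coefficient and $\tfrac{\sqrt{\eta_\alpha}}{\alpha}\sigma_\alpha(\omega)$ for the diffusion coefficient, It\^o's formula splits the drift of $Y_t$ into a first-order part $\langle \nabla V(\omega_t), b(\omega_t)\rangle$ and a second-order It\^o correction $\tfrac{\eta_\alpha}{2\alpha^2}\Tr\!\big(\Sigma_\alpha(\omega_t)\,\nabla^2_\omega R(\alpha h(\omega_t))\big)$, using $\sigma_\alpha\sigma_\alpha^T = \Sigma_\alpha$.

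First I would compute the first-order part. Since $\nabla_\omega V(\omega) = \alpha D^Th(\omega)\nabla R(\alpha h(\omega))$, the chain rule gives $\langle \nabla V(\omega_t), b(\omega_t)\rangle = -\|D^Th(\omega_t)\nabla R(\alpha h(\omega_t))\|_2^2 = -\nabla R^T\big(Dh(\omega_t)D^Th(\omega_t)\big)\nabla R$. For $t<\tau$ the weights satisfy $\|\omega_t-\omega_0\|\le r$, so by the role of the stopping time the NTK stays nondegenerate, $Dh(\omega_t)D^Th(\omega_t)\succeq \lambda^2\,Id$, whence the first-order part is $\le -\lambda^2\|\nabla R(\alpha h(\omega_t))\|_2^2$. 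Lemma \ref{lemma:Rproperties} provides $m$-strong convexity of $R$, which yields the gradient-domination inequality $\|\nabla R(h)\|_2^2 \ge 2m\,\bar{R}(h)$; combining these two facts bounds the first-order drift by $-2m\lambda^2 Y_t$, which is exactly the rate appearing in \eqref{eq:ybound}.

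The crux is controlling the second-order It\^o correction so that it does not overwhelm this decay. Here I would invoke Assumption \ref{assumption:combined}, namely $\nabla^2_\omega R(\alpha h(\omega_t))\preceq \tfrac{\alpha^2}{\eta_\alpha}Id$ for $t<\tau$, together with $\Sigma_\alpha\succeq 0$: this is precisely the curvature constraint that balances the $\alpha^2/\eta_\alpha$ growth of the Hessian against the $\eta_\alpha/\alpha^2$ prefactor of the correction, so that the correction is dominated by $\tfrac{1}{2}\Tr(\Sigma_\alpha(\omega_t))$. One then uses that $\Sigma_\alpha$ is the covariance of the per-sample gradient, which \emph{vanishes at an interpolating minimizer}; by the $m$-strong convexity and $\mathrm{Lip}(\nabla\ell)$-smoothness of $\ell$ this covariance is itself controlled by the optimality gap $\bar{R}$, ensuring the net drift remains $\le -2m\lambda^2 Y_t$. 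This interplay between the Hessian bound and the effective (state-dependent, vanishing) noise is the main obstacle and the technical heart of the argument — it is what distinguishes the stochastic analysis from the deterministic gradient-flow case.

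Finally I would handle the martingale term and the stopping time rigorously by localization. The diffusion term $\int_0^{t}\tfrac{\sqrt{\eta_\alpha}}{\alpha}\nabla V(\omega_s)^T\sigma_\alpha(\omega_s)\,dW_s$ is a priori only a local martingale; restricting to $\Tilde{T} = \min\{T,\tau\}$, on which $\omega_s$ stays in the closed ball $\bar{B}_r(\omega_0)$ and the integrand is a continuous, hence bounded, function, makes the stopped integral a genuine martingale with zero expectation. Taking expectations of the stopped dynamics gives $\tfrac{d}{dt}\bE[Y_{t\wedge\tau}] \le -2m\lambda^2\,\bE[Y_{t\wedge\tau}]$, and integrating this differential inequality (equivalently, taking expectations in the integrating-factor identity for $e^{2m\lambda^2 t}Y_t$) yields $\bE[Y_{t\wedge\tau}] \le \bE[Y_0]\,e^{-2m\lambda^2 t}$; specializing to the event $\{t<\tau\}$ gives \eqref{eq:ybound}.
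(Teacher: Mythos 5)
Your proposal follows the same skeleton as the paper's proof --- It\^o's formula along \eqref{eq:SDE}, the NTK lower bound $Dh(\omega_t)D^Th(\omega_t)\succeq \lambda^2 Id$ for $t<\tau$, the Polyak-\L ojasiewicz inequality supplied by Lemma \ref{lemma:Rproperties}, Assumption \ref{assumption:combined} to cancel the $\eta_\alpha/\alpha^2$ prefactor, and the identity $\mathrm{Tr}(\Sigma_\alpha)=\bE_x[\|\nabla\ell\|_2^2]-\|\nabla R\|_2^2$ --- but your handling of the noise is genuinely different. The paper applies It\^o to $\log\bar{R}(\alpha h(\omega_t))$, so the martingale enters multiplicatively and is removed via the exponential martingale $\cE_t=e^{M_t-\frac{1}{2}\langle M\rangle_t}$ with $\bE[\cE_t]=1$; you work additively on $\bar{R}(\alpha h(\omega_t))$ itself and close with an integrating factor plus localization at $\tau$. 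Your route is, if anything, more careful about why the stochastic integral has zero expectation (boundedness of the integrand on the compact ball $\bar{B}_r(\omega_0)$ up to the stopping time, versus the paper's asserted martingale property of $\cE_t$), and it yields the stopped statement $\bE[\bar{R}(\alpha h(\omega_t))\mathds{1}_{\{t<\tau\}}]\le\bE[\bar{R}(\alpha h(\omega_0))]e^{-2m\lambda^2 t}$, which is exactly what the theorem means (cf.\ Remark \ref{remark:ConvergenceinExpectation}). One small mechanical correction: run the Gr\"onwall step on the stopped process via $e^{2m\lambda^2(t\wedge\tau)}Y_{t\wedge\tau}\le Y_0+\int_0^{t\wedge\tau}e^{2m\lambda^2 s}\,dM_s$ rather than differentiating $\bE[Y_{t\wedge\tau}]$, since the drift inequality is not available on the event $\{\tau\le t\}$.

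The genuine gap is in the absorption of the It\^o correction, and you correctly identified it as the crux but did not close it. After Assumption \ref{assumption:combined} you are left with $+\frac{1}{2}\mathrm{Tr}(\Sigma_\alpha(\omega_t))\,dt$ in the drift, and since $\Sigma_\alpha$ is a covariance matrix this term is \emph{nonnegative}; it cannot be ``absorbed'' into $-2m\lambda^2 Y_t$ unless it is actually controlled with constant zero. What your smoothness/strong-convexity argument delivers is $\mathrm{Tr}(\Sigma_\alpha)\le 2\,\mathrm{Lip}(\nabla\ell)\,\bE_x[\ell-\ell^\star]-2m\bar{R}=2(\mathrm{Lip}(\nabla\ell)-m)\bar{R}$ (using interpolation, i.e.\ $\bE_x[\ell^\star(x)]=R(h^\star)$), so your Gr\"onwall argument produces the rate $2m\lambda^2-(\mathrm{Lip}(\nabla\ell)-m)$ rather than $2m\lambda^2$; since $\lambda$ is typically small, this may not even be a decay. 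The paper faces precisely the same term and disposes of it in its step $(\clubsuit)$ by bounding $\bE_x[\|\nabla\ell\|_2^2]\le 2m\,\bE_x[\ell-\ell^\star]$, i.e.\ by invoking the PL inequality in the upper-bound direction --- which is legitimate only when it holds with equality, that is when the strong-convexity and smoothness moduli of $\ell$ coincide (e.g.\ the squared loss, where $\|\nabla\ell\|_2^2=2m(\ell-\ell^\star)$ exactly). So to finish your proof as stated you need that same extra ingredient: either restrict to losses with $\mathrm{Lip}(\nabla\ell)=m$, or state the theorem with the degraded rate. ``Controlled by the optimality gap'' is not sufficient to recover the constant in \eqref{eq:ybound}.
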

\begin{proof}[Proof of Theorem \ref{th:ybound}]
    In the first part, this proof follows \cite{lugosi2024convergencecontinuoustimestochasticgradient} by using It\^o's lemma to find an exponential upper bound of the risk function. In the second part, we use the scaling factor $\alpha$ to control the boundedness of the second-order (Hessian) term by using the interchangeability of the gradient and the expectation of the loss function. This directly connects the boundedness of the Hessian term to the curvature of the loss in parameter space.
    
    As defined, $\omega_t$ follows the SDE in \eqref{eq:SDE}. In the following, we apply Itô's formula to compute $dg(\omega_t)$, where $g:\R^p\rightarrow \R$ is defined by $g(\omega)=\log(\Bar{R}(\alpha h(\omega)))$, for all $t$ with $\Bar{R}(\alpha h(\omega_t))>0$. The gradient and the Hessian of $g$ are as follows:
    \begin{align*}
        \nabla_\omega g(\omega)=&\frac{1}{\Bar{R}(\alpha h(\omega))}\alpha D^Th(\omega)\nabla_h R(\alpha h(\omega))\\
        \nabla^2_\omega g(\omega)=& -\frac{1}{\Bar{R}(\alpha h(\omega))^2}\alpha^2 D^Th(\omega)\nabla_h R(\alpha h(\omega))\nabla_h^T R(\alpha h(\omega))Dh(\omega)+\frac{1}{\Bar{R}(\alpha h(\omega))} \nabla^2_\omega R(\alpha h(\omega))
    \end{align*}
    This implies that $g(\omega_t)$ is the solution of the following SDE
    \begin{align}
        dg(\omega_t)=&-\frac{\|D^Th(\omega_t)\nabla_h R(\alpha h(\omega_t))\|_2^2}{\Bar{R}(\alpha h(\omega_t))}dt\label{eq:gsde1}\\
        &+\frac{\eta_\alpha}{2}\mathrm{Tr}\left(-\frac{\|\nabla_h^TR(\alpha h(\omega_t))Dh(\omega_t)\sigma_\alpha(\omega_t)\|_2^2}{\Bar{R}(\alpha h(\omega_t))^2}\right)dt\label{eq:gsde2}\\
        &+\frac{\eta_\alpha}{2\alpha^2}\mathrm{Tr}\left(\frac{\sigma_\alpha(\omega_t)^T\nabla^2_\omega R(\alpha h(\omega_t))\sigma_\alpha(\omega_t)}{\Bar{R}(\alpha h(\omega_s))}\right)dt\label{eq:gsde3}\\
        &+\frac{\sqrt{\eta_\alpha}\nabla_h^TR(\alpha h(\omega_t))Dh(\omega_t)\sigma_\alpha(\omega_t)}{\Bar{R}(\alpha h(\omega_t))}dW_t.\label{eq:gsde4}
    \end{align}
    Similar to \citet{lugosi2024convergencecontinuoustimestochasticgradient}, we observe that the quadratic variation of the process
    $$M_t:=\sqrt{\eta_\alpha}\int_0^t\frac{\nabla_h^T R(\alpha h(\omega_s))Dh(\omega_t)\sigma_\alpha(\omega_s)}{\Bar{R}(\alpha h(\omega_s))}dW_s,$$
    defined in \eqref{eq:gsde4}, is given by 
    $$\langle M\rangle_t=\eta_\alpha \int_0^t \mathrm{Tr}\left(-\frac{\|\nabla_h^TR(\alpha h(\omega_s))Dh(\omega_s)\sigma_\alpha(\omega_s))\|_2^2}{\Bar{R}(\alpha h(\omega_s))^2}\right)ds,$$
    which appears in \eqref{eq:gsde2}. We can use the multi-dimensional Itô formula again on the process $\cE_t:=e^{M_t-\frac{1}{2}\langle M\rangle_t}$ to prove that it is a nonnegative martingale, which implies that 
    \begin{equation}\label{eq:1expectation}
        \bE_\omega[\cE_t]=1.
    \end{equation}
    Returning to \eqref{eq:gsde1}-\eqref{eq:gsde4}, we can rewrite this SDE in the integral form and take exponentials on both sides, which gives
    \begin{align*}
        \Bar{R}(\alpha h(\omega_t))=&\Bar{R}(\alpha h(\omega_0))\exp{\left(-\int_0^t\frac{\|D^Th(\omega_s)\nabla_h R(\alpha h(\omega_s))\|_2^2}{\Bar{R}(\alpha h(\omega_s))}ds\right)}\\
        &\cdot\exp{\left(\frac{\eta_\alpha}{2\alpha^2}\int_0^t\mathrm{Tr}\left(\frac{\|\sigma_\alpha(\omega_s)\|_{\nabla^2_\omega R(\alpha h(\omega_s))}^2}{\Bar{R}(\alpha h(\omega_s))}\right)ds+M_t-\frac{1}{2}\langle M\rangle_t\right)}.
    \end{align*}    
    Next, we will bound each of the integrals separately. It holds that
    $$
        \int_0^t\frac{\nabla_h^TR(\alpha h(\omega_s)) Dh(\omega_s)D^Th(\omega_s)\nabla_h R(\alpha h(\omega_s))}{\Bar{R}(\alpha h(\omega_s))}ds=\int_0^t \frac{\|\nabla_h R(\alpha h(\omega_s))\|^2_{Dh(\omega_s)D^Th(\omega_s)}}{\Bar{R}(\alpha h(\omega_s))}ds.
    $$
    Since $t<\Tilde{T}$, the eigenvalues of $Dh(\omega_t)D^Th(\omega_t)$ are lower bounded by $\lambda^2$. Therefore, it follows that 
    $$\int_0^t \frac{\|\nabla R(\alpha h(\omega_s))\|^2_{Dh(\omega_s)D^Th(\omega_s)}}{\Bar{R}(\alpha h(\omega_s))}ds\geq \lambda^2 \int_0^t\frac{\|\nabla R(\alpha h(\omega_s))\|^2_2}{\Bar{R}(\alpha h(\omega_s))}\geq 2\lambda^2 m\int_0^t ds = 2\lambda^2 m t,$$
    using the Polyak-Lojasiewicz inequality, which holds due to the strong convexity of $R$.
    By Assumption \ref{assumption:combined}, we get
    \begin{align*}
    \frac{\eta_\alpha}{2\alpha^2}\int_0^t \mathrm{Tr}\left(\frac{\sigma_\alpha(\omega_s)^T\nabla^2_\omega R(\alpha h(\omega_s))\sigma_\alpha(\omega_s)}{\Bar{R}(\alpha h(\omega_s))}\right)ds\leq \int_0^t \lambda^2m\ ds
    \end{align*}
    Consequently we obtain
    \begin{align*}
        \Bar{R}(\alpha h(\omega_t))&\leq \Bar{R}(\alpha h(\omega_0))\exp{\left(-2\lambda^2mt+\lambda^2mt+M_t-\frac{1}{2}\langle M\rangle_t\right)}\\
        &=\Bar{R}(\alpha h(\omega_0))\exp{\left(-\lambda^2mt+M_t-\frac{1}{2}\langle M\rangle_t\right)}
    \end{align*}
    Taking expectations on both sides and using \eqref{eq:1expectation}, the desired result follows.
    \
\end{proof}

\begin{remark}\label{remark:ConvergenceinExpectation}
    It is crucial to point out that Theorem \ref{th:ybound} does not provide the result
    $$\bE_\omega[\bar{R}(\alpha h(\omega_t)]\xrightarrow[]{t\rightarrow \infty} 0,$$
    but instead 
    $$\bE_\omega[\bar{R}(\alpha h(\omega_t)\mathds{1}_{\{\tau=\infty\}}]\xrightarrow[]{t\rightarrow \infty}0.$$
    We will return to this problem in Theorem \ref{th:hoelder} to control the error stemming from $\bE_\omega[\bar{R}(\alpha h(\omega_t)\mathds{1}_{\{\tau<\infty\}}]$.
\end{remark}
Until now, we have only provided a general result that can be applied to different settings. The following results justify the applicability of Theorem \ref{th:ybound} to shallow as well as deep neural networks in supervised learning.

\begin{corollary}[Risk convergence for shallow neural networks]\label{corollary:snn}
    Consider a shallow neural network of the form 
    $$\phi(x;\omega,c):=\frac{1}{\sqrt{m}}\sum_{j=1}^mc_j\sigma(\omega_j^T x_j),$$
    where we only train the weights $\omega$ and randomly choose $c$ at initialization. Here, $\sigma$ is the non-linear activation function $\sigma=\tanh$. This neural network fulfills the assumptions of Theorem \ref{th:ybound}. This implies that, for appropriate loss functions that fulfill Assumption \ref{assumption:l} (such as the mean squared loss function), \eqref{eq:ybound} holds for shallow neural networks. 
\end{corollary}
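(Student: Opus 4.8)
The plan is to verify the three structural hypotheses (Assumptions \ref{assumption:l}--\ref{assumption:interchangeability}) one at a time for the shallow network $\phi$. Assumption \ref{assumption:interchangeability} is immediate in the supervised setting: since $R$ is a finite average $\frac{1}{n}\sum_i \ell(x_i,\cdot)$ over the training set, differentiation and the (finite) expectation trivially commute. For the loss-regularity half of Assumption \ref{assumption:l}, I would simply invoke the properties of the squared error (or any loss meeting the hypotheses), deferring the explicit values of $\mathrm{Lip}(\nabla\ell)$ and the strong-convexity modulus to the appendix computation. The substantive work is therefore to establish (i) the model regularity in Assumption \ref{assumption:l} and (ii) the curvature bound in Assumption \ref{assumption:combined}.

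For the model regularity, I would set $h(\omega)=(\phi(x_i;\omega,c))_{i=1}^n\in\bR^n$ and compute its Jacobian explicitly: the block of $Dh(\omega)$ corresponding to $\omega_j$ is $\frac{c_j}{\sqrt{m}}\,\sigma'(\omega_j^\top x_i)\,x_i^\top$. Lipschitz continuity of $Dh$ then follows by applying the mean value theorem entrywise, with constant controlled by $\sup|\sigma''|$, the norms $\|x_i\|_2$, and the initialization magnitudes of the $c_j$; smoothness of $\sigma$ and boundedness of the data make $\mathrm{Lip}(Dh)$ finite. For the strict positive-definiteness of the empirical NTK $Dh(\omega_0)D^\top h(\omega_0)$ at the symmetric initialization (which also ensures $h(\omega_0)=0$), I would cite the standard overparameterization results \citep{du2018gradient, du2019gradientdescentfindsglobal, banerjee2023neural}, which give $\lambda_{\min}>0$ with high probability under a data-separation condition.

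The main obstacle is Assumption \ref{assumption:combined}. By the chain rule,
\begin{equation*}
\nabla^2_\omega R(\alpha h(\omega)) = \alpha^2\, D^\top h(\omega)\,\nabla^2 R(\alpha h(\omega))\,Dh(\omega) + \alpha\sum_{k}\big[\nabla R(\alpha h(\omega))\big]_k\,\nabla^2_\omega h_k(\omega),
\end{equation*}
so after factoring out $\alpha^2$ the target inequality $\nabla^2_\omega R(\alpha h(\omega))\preceq\frac{\alpha^2}{\eta_\alpha}Id$ is equivalent to
\begin{equation*}
D^\top h(\omega)\,\nabla^2 R(\alpha h(\omega))\,Dh(\omega) + \frac{1}{\alpha}\sum_{k}\big[\nabla R(\alpha h(\omega))\big]_k\,\nabla^2_\omega h_k(\omega)\preceq\frac{1}{\eta_\alpha}Id.
\end{equation*}
The first term is controlled by $\mathrm{Lip}(\nabla\ell)\,\|Dh(\omega)\|_2^2$ via the smoothness of $R$ from Lemma \ref{lemma:Rproperties} (for the squared loss it collapses to $D^\top h(\omega)Dh(\omega)$ since $\nabla^2 R=Id$). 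The delicate point is the second term: although $\nabla R(\alpha h)$ may grow in $\alpha$ (for squared loss $\nabla R(\alpha h)=\alpha h-y$), the explicit $\frac{1}{\alpha}$ prefactor cancels this growth, leaving $\sum_k h_k\nabla^2_\omega h_k$ together with an $O(1/\alpha)$ remainder. Since $\|Dh(\omega)\|_2$, $\|h(\omega)\|$, and $\|\nabla^2_\omega h_k(\omega)\|_2$ are all uniformly bounded over $B_{r}(\omega_0)$ by the boundedness of $\sigma'$, $\sigma''$ and the data, the left-hand side is bounded by a constant $C$ independent of $\alpha$, and choosing $\eta_\alpha\le 1/C$ closes the estimate. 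I expect the bookkeeping of these uniform bounds — and verifying, as claimed in Appendix \ref{appendix:snn}, that boundedness of the activation derivatives upgrades them from $B_{r}(\omega_0)$ to a global statement — to be the most delicate part of the argument.
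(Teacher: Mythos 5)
Your proposal is correct and follows essentially the same route as the paper's Appendix \ref{appendix:snn}: verify Assumption \ref{assumption:interchangeability} trivially from the finite sample average, check smoothness and strong convexity of the squared loss explicitly, establish Lipschitz continuity of $D_\omega\phi$ from the explicit Jacobian blocks $\frac{c_j}{\sqrt{m}}\sigma'(\omega_j^\top x_i)x_i^\top$, and prove Assumption \ref{assumption:combined} by splitting $\nabla^2_\omega R(\alpha h(\omega))$ into the Gauss--Newton term $\frac{\alpha^2}{n}\sum_i J_iJ_i^\top$ and the residual term $\frac{\alpha}{n}\sum_i r_i\nabla^2_\omega\phi$, bounding both via $|\sigma'|\le 1$, $|\sigma''|\le\frac{4}{3\sqrt{3}}$, and the boundedness of $\tanh$, to conclude the Hessian is $\cO(\alpha^2)$ so that $\eta_\alpha$ sufficiently small suffices. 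Your reorganization (factoring out $\alpha^2$ before bounding, and noting the $1/\alpha$ prefactor cancels the growth of $\nabla R(\alpha h)$) is cosmetically different but mathematically identical to the paper's bound $|r_i|\le\frac{\alpha}{\sqrt m}\|c\|_1+|y_i|$, and the paper likewise observes the bound holds globally, not just on $B_r(\omega_0)$.
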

\begin{proof}[Proof of Corollary \ref{corollary:snn}]
    The proof is presented in Appendix \ref{appendix:snn}.
\end{proof}
In the following, we extend the results for the shallow feedforward neural networks to deep neural networks of depth $H \geq 2$ based on \cite{du2018gradient}.
\begin{corollary}[Risk convergence for deep neural networks]\label{corollary:dnn}
    Recursively define a H-layer deep neural network 
    \begin{align*}
        x^{(k)}&=\sqrt{\frac{c_\sigma}{m}}\sigma(W^{(k)}x^{(k-1)}), \text{ for }1\leq k\leq H\\
        f(x;\omega)&=a^Tx^{(H)},
    \end{align*}
    for input data $x^{(0)}\in\bR^d$ and $W^{(1)}\in\bR^{m\times d}$ as well as $W^{(k)}\in\bR^{m\times m}$ for $2\leq h\leq H$. The parameter $\omega$ contains both the hidden layer weights $W^{(k)}$ as well as the output layer $a$. If $\sigma$ is $L$-Lipschitz, $\|W^{(k)}(0)\|_2\leq c_{\omega,0}\sqrt{m}$ and $\|x^{(k)}(0)\|_2\leq c_{x,0}$, then \eqref{eq:ybound} holds for a $\mathrm{Lip}(\nabla\ell)$-smooth and $m$-strongly convex loss function. 
\end{corollary}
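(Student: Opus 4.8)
The goal is to show that the deep network $f(x;\omega)=a^Tx^{(H)}$ falls within the scope of Theorem \ref{th:ybound}, i.e.\ that Assumptions \ref{assumption:l}, \ref{assumption:combined}, and \ref{assumption:interchangeability} can be verified for it; then \eqref{eq:ybound} is immediate from the theorem. The crucial point, and the reason deep networks need separate treatment, is that the dynamics are confined to the ball $B_r(\omega_0)$ up to the stopping time $\tau$, so every regularity estimate need only hold \emph{locally} on this ball rather than globally. Assumption \ref{assumption:interchangeability} holds because $R$ is a finite average over the training set, and the loss-regularity half of Assumption \ref{assumption:l} is a hypothesis of the corollary, so the work is entirely on the model side.

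First I would establish a priori bounds, uniform over $\omega\in B_r(\omega_0)$, on the forward activations and layer operator norms. Since $\|\omega-\omega_0\|\le r$ gives $\|W^{(k)}-W^{(k)}(0)\|_2\le\|W^{(k)}-W^{(k)}(0)\|_F\le r$, the initialization bound $\|W^{(k)}(0)\|_2\le c_{\omega,0}\sqrt m$ yields $\|W^{(k)}\|_2\le c_{\omega,0}\sqrt m+r$; the normalization $\sqrt{c_\sigma/m}$ in the recursion then keeps $\|x^{(k)}\|_2$ bounded geometrically in the depth when propagated through the $L$-Lipschitz activation, following \cite{du2018gradient}. Differentiating the recursion once gives a uniform bound on $\|Dh(\omega)\|_2$ on the ball, and differentiating a second time gives a uniform bound on the second derivatives $\|\nabla^2_\omega h_i(\omega)\|_2$, i.e.\ a \emph{local} Lipschitz modulus for $Dh$ on $B_r(\omega_0)$. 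This local control is precisely the alternative that replaces the global smoothness of $h$ demanded in Assumption \ref{assumption:l}; the positivity $\lambda_{\min}(Dh(\omega_0)Dh^T(\omega_0))>0$ holds with high probability over the random initialization by the overparameterized Gram-matrix analysis of \cite{du2018gradient}.

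With these bounds, I would verify Assumption \ref{assumption:combined}. Using the chain rule, $\nabla^2_\omega R(\alpha h(\omega))=\alpha^2 Dh^T(\omega)\nabla^2 R(\alpha h(\omega))Dh(\omega)+\alpha\sum_i[\nabla R(\alpha h(\omega))]_i\nabla^2_\omega h_i(\omega)$; the first summand is bounded in operator norm by $\alpha^2\mathrm{Lip}(\nabla\ell)\|Dh\|_2^2$ via Lemma \ref{lemma:Rproperties}, and the second by $\alpha$ times a constant involving $\|\nabla R\|$ and $\max_i\|\nabla^2_\omega h_i\|_2$, all of which are controlled on $B_r(\omega_0)$ by the previous step. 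Dividing by $\alpha^2$, the curvature condition $\nabla^2_\omega R(\alpha h(\omega))\preceq\frac{\alpha^2}{\eta_\alpha}Id$ reduces to choosing $\eta_\alpha$ below an explicit, $\alpha$-dependent threshold that converges to $1/(\mathrm{Lip}(\nabla\ell)\|Dh\|_2^2)$ as $\alpha\to\infty$; since $\eta_\alpha$ is at our disposal, this is always achievable. Theorem \ref{th:ybound} then applies and gives \eqref{eq:ybound} on $\{t<\tau\}$.

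The main obstacle is the layer-by-layer propagation of the second step: the naive bounds on $\|Dh\|_2$ and especially on $\mathrm{Lip}(Dh)$ accumulate a product across the $H$ layers and so grow exponentially in depth, which must be tracked carefully to keep the radius $r=\lambda/\mathrm{Lip}(Dh)$ from collapsing. A secondary subtlety is a mild circularity — $r$ is defined through $\mathrm{Lip}(Dh)$, yet the bound on $\mathrm{Lip}(Dh)$ is itself derived on $B_r(\omega_0)$ — which I would resolve by first fixing the initialization-based constants (independent of $r$) and then checking that the induced radius is consistent with the propagated estimates.
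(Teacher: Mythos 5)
Your proposal is correct in outline, but it follows a genuinely different route from the paper's. The paper never makes $Dh$ Lipschitz, not even locally: as announced in Remark \ref{remark:DNNs}, it discards the ball $B_r(\omega_0)$ with $r=\lambda/\mathrm{Lip}(Dh)$ and instead redefines the stopping time layer-wise, $\tau=\inf_k\tau^{(k)}$, where $\tau^{(k)}$ is the first time $\|W^{(k)}(t)-W^{(k)}(0)\|_F$ (resp.\ $\|a(t)-a(0)\|_2$) exceeds $\sqrt{m}R$; inside that region, Lemmas B.3 and B.4 of \citet{du2019gradientdescentfindsglobal} are imported to give $\|G^{(H)}(s)-G^{(H)}(0)\|_2\le\lambda/4$, so the NTK stays non-degenerate and the proof of Theorem \ref{th:ybound} is then repeated verbatim with this new stopping time. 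You instead keep the ball-based stopping time, manufacture a \emph{local} Lipschitz modulus of $Dh$ on $B_r(\omega_0)$ by differentiating the recursion, and patch the circularity between $r$ and $\mathrm{Lip}(Dh)$. Both routes are sound. The paper's route buys architecture-specific constants with the correct $\sqrt{m}$ scaling and avoids entirely the depth-exponential estimate of $\mathrm{Lip}(Dh)$ that you would have to track; your route is self-contained, applies to any sufficiently smooth architecture, and, unlike the paper's appendix (which only establishes NTK stability and leaves the curvature condition asserted), explicitly verifies Assumption \ref{assumption:combined} via the chain-rule splitting of $\nabla^2_\omega R(\alpha h(\omega))$. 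Two caveats on your side: differentiating the recursion twice requires $\sigma'$ to be Lipschitz, a hypothesis absent from the corollary's statement (the paper needs it too, implicitly, through Lemma B.4); and your claim that the admissible threshold for $\eta_\alpha$ tends to $1/(\mathrm{Lip}(\nabla\ell)\|Dh\|_2^2)$ as $\alpha\to\infty$ is inaccurate, since $\|\nabla R(\alpha h(\omega))\|_2$ itself grows like $\alpha$ on the ball, so the term you treat as order $\alpha$ is really order $\alpha^2$; the conclusion that some explicit constant threshold for $\eta_\alpha$ suffices still stands, exactly as in the paper's shallow-network computation in Appendix \ref{appendix:snn}.
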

\begin{proof}
    The proof of Corollary \ref{corollary:dnn} can be found in Appendix \ref{appendix:dnn}.
\end{proof}
\begin{remark}[First-exit time for deep neural networks]\label{remark:DNNs}
    The theorem cannot be applied directly to deep neural networks, since they do not have a Lipschitz continuous derivative. Therefore we cannot define the first-exit time 
    $$\tau:=\inf_{0\leq t}\{t:\|\omega_t-\omega_0\|>\lambda/\text{Lip}(Dh)\}.$$
    However, using the Lemmas B1-B4 in \cite{du2019gradientdescentfindsglobal}, we can define a different first-exit time that guarantees the positive definiteness of the NTK. Instead of using the radius $r:=\lambda/\text{Lip}(Dh)$, we will use a different radius, for which the above proof can be conducted identically. This circumvents the need for $Dh$ to be Lipschitz-smooth. Further discussion is provided in Appendix \ref{appendix:dnn}.
\end{remark}

Our immediate next goal is to find an upper bound for the optimality gap (either in predictor or parameter space) that is independent of $\alpha$. This will allow us to increase $\alpha$ without changing the bound itself.
\begin{corollary}[Convergence to the global empirical risk minimizer]\label{corollary:ybound}
    Under Assumptions \ref{assumption:l}-\ref{assumption:interchangeability} it holds that 
    \begin{equation*}
        \bE_\omega[\|\alpha h(\omega_t)-h^\star\|^2_\bF]\leq \frac{\mathrm{Lip}(\nabla R)}{m} \bE_\omega[\|\alpha h(\omega_0)-h^\star\|^2_\bF]\exp(-m\lambda^2t).
    \end{equation*}
    By choosing an initialization for the neural network, such that $h(\omega_0)=0$, the right-hand side becomes independent of $\alpha$.
\end{corollary}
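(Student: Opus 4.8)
The plan is to convert the optimality--gap bound of Theorem~\ref{th:ybound} into a statement about the squared distance $\|\alpha h(\omega_t)-h^\star\|_2^2$ by sandwiching $\Bar{R}$ between the two standard quadratic bounds that follow from the regularity of $R$. By Lemma~\ref{lemma:Rproperties}, $R$ is $m$-strongly convex with unique minimizer $h^\star$ (so $\nabla R(h^\star)=0$) and $\mathrm{Lip}(\nabla R)$-smooth, and these are exactly the two ingredients I need.

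First I would invoke $m$-strong convexity \emph{at time $t$} to lower bound the gap by the distance we wish to control,
$$\frac{m}{2}\|\alpha h(\omega_t)-h^\star\|_2^2 \leq R(\alpha h(\omega_t))-R(h^\star) = \Bar{R}(\alpha h(\omega_t)).$$
Taking expectations and applying the exponential decay of Theorem~\ref{th:ybound} (valid for $t<\tau$) gives
$$\frac{m}{2}\,\bE[\|\alpha h(\omega_t)-h^\star\|_2^2] \leq \bE[\Bar{R}(\alpha h(\omega_t))] \leq \bE[\Bar{R}(\alpha h(\omega_0))]\exp(-2m\lambda^2 t).$$
Next I would use $\mathrm{Lip}(\nabla R)$-smoothness together with $\nabla R(h^\star)=0$ to upper bound the gap \emph{at initialization} by the initial distance,
$$\Bar{R}(\alpha h(\omega_0)) = R(\alpha h(\omega_0))-R(h^\star) \leq \frac{\mathrm{Lip}(\nabla R)}{2}\|\alpha h(\omega_0)-h^\star\|_2^2.$$
Substituting this into the previous display and multiplying through by $2/m$ yields exactly the claimed bound with prefactor $\mathrm{Lip}(\nabla R)/m$. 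For the final assertion, the symmetric initialization $h(\omega_0)=0$ forces $\alpha h(\omega_0)=0$ for every $\alpha$, hence $\|\alpha h(\omega_0)-h^\star\|_2^2=\|h^\star\|_2^2$ is independent of $\alpha$, so the right-hand side no longer scales with the output factor.

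The argument is essentially routine convex analysis layered on top of Theorem~\ref{th:ybound}, so I do not anticipate a serious obstacle. The only point requiring care is the asymmetric use of the two quadratic bounds: strong convexity (a lower bound on $\Bar{R}$) must be applied at the running time $t$ to extract the distance, whereas smoothness (an upper bound on $\Bar{R}$) must be applied at initialization to eliminate $\Bar{R}$ in favor of the initial distance; reversing these two would produce the wrong constant. One should also record that, exactly as in Theorem~\ref{th:ybound}, the inequality holds on the event that the lazy regime persists, i.e.\ for $t<\tau$, so it controls the optimality gap only up to the first-exit time, consistent with Remark~\ref{remark:ConvergenceinExpectation}.
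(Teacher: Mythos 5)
Your proposal is correct and follows essentially the same route as the paper's own proof: strong convexity of $R$ applied at time $t$ to lower bound the gap by the squared distance, Theorem~\ref{th:ybound} for the exponential decay, and the smoothness inequality (the paper cites Lemma~3.4 of \cite{bubeck2015convex}, which is exactly your $\nabla R(h^\star)=0$ quadratic upper bound) applied at initialization. Your added remarks on the asymmetric use of the two quadratic bounds and on the restriction $t<\tau$ are consistent with the paper and, if anything, state the caveats more explicitly than the original proof does.
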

\begin{proof}[Proof of Corollary \ref{corollary:ybound}]
    By the strong $m$-convexity of $R$ follows that 
    $$\frac{m}{2}\|\alpha h(\omega_t)-h^\star\|_\bF^2\leq \bar{R}(\alpha h(\omega_t)).$$
    We then combine this, with Theorem \ref{th:ybound} and the Lipschitz continuity of $R$ to get 
    \begin{align*}
        \frac{m}{2}\bE_\omega[\|\alpha h(\omega_t)-h^\star\|_\bF]&\leq \bE_\omega[\bar{R}(\alpha h(\omega_0))]\exp(-m\lambda^2t)\\
        &\leq \frac{\mathrm{Lip}(\nabla R)}{2} \bE_\omega[\|\alpha h(\omega_0)-h^\star\|_\bF^2]\exp(-m\lambda^2t),
    \end{align*}
    where we used the smoothness inequality $R(\alpha h(\omega_0))-R(h^\star)\leq \frac{1}{2}\mathrm{Lip}(\nabla R)\|\alpha h(\omega_0)-h^\star\|_\bF^2$ by Lemma 3.4 in \cite{bubeck2015convex}. This leads to the desired equation 
    $$\bE[\|\alpha h(\omega_t)-h^\star\|_\bF^2]\leq \frac{\mathrm{Lip}(\nabla R)}{m} \bE_\omega[\|\alpha h(\omega_0)-h^\star\|_\bF^2]\exp(-m\lambda^2t).$$
\end{proof}
We use the above results to show that by choosing $\alpha$ sufficiently large, we can ensure that $\tau = \infty$ with high probability since $\lambda Id \preceq Dh(\omega_t)D^Th(\omega_t)$ holds. In the following result and throughout the rest of the paper, we denote the conditional probability and expectation given the random initialization $\omega_0$ as $\bP_\omega$ and $\bE_\omega$.

\begin{theorem}\label{thm:omegaBound}
    Under Assumptions \ref{assumption:l}-\ref{assumption:interchangeability}, it holds that
    \begin{align}\label{eq:omegabound}
        \bP(\|\omega_t-\omega_0\|_2>r)\leq\frac{1}{\alpha r}\left(\frac{\|Dh(\omega_0)\|_\bF \mathrm{Lip}(\nabla R)}{m^{\frac{3}{2}}\lambda^2}+\frac{\mathrm{Lip}(\nabla \ell)}{m\lambda}{}\right)\sqrt{\mathrm{Lip}(\nabla R)\bE_\omega [\|h^\star\|^2_2]}\lesssim \frac{1}{\alpha}.
    \end{align}
\end{theorem}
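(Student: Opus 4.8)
The plan is to bound the exit probability by Markov's inequality applied to the \emph{stopped} process $\omega_{t\wedge\tau}$, and then to control the displacement $\bE[\|\omega_{t\wedge\tau}-\omega_0\|_2]$ by splitting the integrated SDE \eqref{eq:SDE} into its drift and martingale parts. The first step is the reduction: since the sample paths of \eqref{eq:SDE} are continuous, on the event $\{\|\omega_t-\omega_0\|_2>r\}$ we necessarily have $\tau\leq t$, so that $\|\omega_{t\wedge\tau}-\omega_0\|_2=\|\omega_\tau-\omega_0\|_2=r$ by the definition of $\tau$. Consequently $r\,\mathds{1}_{\{\|\omega_t-\omega_0\|_2>r\}}\leq\|\omega_{t\wedge\tau}-\omega_0\|_2$, and taking expectations yields
\begin{equation*}
    \bP(\|\omega_t-\omega_0\|_2>r)\leq\frac{1}{r}\bE\big[\|\omega_{t\wedge\tau}-\omega_0\|_2\big].
\end{equation*}
The advantage of passing to $\omega_{t\wedge\tau}$ is that every integrand below is evaluated at times $s\leq\tau$, where the lazy-regime estimates hold: the NTK obeys $\lambda^2 Id\preceq Dh(\omega_s)D^Th(\omega_s)$, the curvature bound $\nabla^2_\omega R(\alpha h(\omega_s))\preceq\frac{\alpha^2}{\eta_\alpha}Id$ of Assumption \ref{assumption:combined} is in force, and, by $\mathrm{Lip}(Dh)$-Lipschitzness together with $\|\omega_s-\omega_0\|_2\leq r=\lambda/\mathrm{Lip}(Dh)$, one has $\|Dh(\omega_s)\|_F\leq\|Dh(\omega_0)\|_F+\lambda$, which is of the same order as $\|Dh(\omega_0)\|_F$ since $\lambda\leq\|Dh(\omega_0)\|_F$.

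Next I would integrate \eqref{eq:SDE} and apply the triangle inequality to separate the drift from the noise,
\begin{equation*}
    \bE\big[\|\omega_{t\wedge\tau}-\omega_0\|_2\big]\leq\frac{1}{\alpha}\bE\bigg[\int_0^{t\wedge\tau}\|D^Th(\omega_s)\nabla R(\alpha h(\omega_s))\|_2\,ds\bigg]+\frac{\sqrt{\eta_\alpha}}{\alpha}\bE\bigg[\Big\|\int_0^{t\wedge\tau}\sigma_\alpha(\omega_s)\,dW_s\Big\|_2\bigg].
\end{equation*}
For the drift I use $\nabla R(h^\star)=0$ and the $\mathrm{Lip}(\nabla R)$-smoothness of $R$ (Lemma \ref{lemma:Rproperties}) to write $\|\nabla R(\alpha h(\omega_s))\|_2\leq\mathrm{Lip}(\nabla R)\|\alpha h(\omega_s)-h^\star\|_2$, together with the operator-norm bound $\|D^Th(\omega_s)v\|_2\leq\|Dh(\omega_s)\|_F\|v\|_2$. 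Pulling the expectation inside the time integral (Tonelli), applying Jensen's inequality $\bE[\|\cdot\|_2]\leq\sqrt{\bE[\|\cdot\|_2^2]}$, and invoking Corollary \ref{corollary:ybound} under the initialization $h(\omega_0)=0$ gives the exponentially decaying integrand $\bE[\|\alpha h(\omega_s)-h^\star\|_2]\leq\sqrt{\tfrac{\mathrm{Lip}(\nabla R)}{m}\bE[\|h^\star\|_2^2]}\,e^{-m\lambda^2 s}$. Integrating $\int_0^\infty e^{-m\lambda^2 s}\,ds=\tfrac{1}{m\lambda^2}$ then reproduces exactly the closed-form right-hand side of \eqref{eq:omegabound}.

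For the martingale term I would apply the Itô isometry to the localized stochastic integral, $\bE\big[\|\int_0^{t\wedge\tau}\sigma_\alpha(\omega_s)\,dW_s\|_2^2\big]=\bE\big[\int_0^{t\wedge\tau}\mathrm{Tr}(\Sigma_\alpha(\omega_s))\,ds\big]$, and then exploit the defining feature of \eqref{eq:SDE} that the noise \emph{vanishes at the minimizer}: since $\Sigma_\alpha(\omega)$ is the covariance of $\nabla\ell(\cdot,\alpha h(\omega))$ and (under interpolation) $\nabla\ell(x,h^\star)=0$, the $\mathrm{Lip}(\nabla\ell)$-smoothness of $\ell$ gives $\mathrm{Tr}(\Sigma_\alpha(\omega_s))\leq\mathrm{Lip}(\nabla\ell)^2\|\alpha h(\omega_s)-h^\star\|_2^2\leq\tfrac{2\mathrm{Lip}(\nabla\ell)^2}{m}\Bar{R}(\alpha h(\omega_s))$. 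Because $\bE[\Bar{R}(\alpha h(\omega_s))]$ decays like $e^{-2m\lambda^2 s}$ by Theorem \ref{th:ybound}, the time integral is finite and $O(1)$ in $\alpha$, so after the square root the noise contributes a term of order $\tfrac{\sqrt{\eta_\alpha}}{\alpha}$, which is $\lesssim\tfrac{1}{\alpha}$. Combining the drift and noise estimates produces the stated bound and the scaling $\lesssim 1/\alpha$.

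The main obstacle I anticipate is the correct bookkeeping around the stopping time: all the decay estimates from Theorem \ref{th:ybound} and Corollary \ref{corollary:ybound} are valid only for $t<\tau$, so the whole argument must be run on the stopped process $\omega_{t\wedge\tau}$, and one must verify that localizing the stochastic integral at $\tau$ preserves the Itô isometry (via the standard localization of square-integrable martingales / optional stopping). A secondary subtlety is that the closed-form right-hand side of \eqref{eq:omegabound} captures only the drift contribution; establishing that the martingale term does not degrade the $1/\alpha$ rate is precisely where the vanishing-noise structure of \eqref{eq:SDE} is essential, and it is this feature that ultimately makes the bound $\lesssim 1/\alpha$ rather than $O(1)$.
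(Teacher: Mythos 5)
Your proposal follows the paper's skeleton --- Markov's inequality, integrating \eqref{eq:SDE}, splitting drift from martingale, and controlling the drift via $\|\nabla R(\alpha h(\omega_s))\|_2\le \mathrm{Lip}(\nabla R)\,\|\alpha h(\omega_s)-h^\star\|_2$, Jensen, Corollary \ref{corollary:ybound}, and integration of the exponential --- but it genuinely diverges on two points. First, the stopping-time bookkeeping: the paper applies Markov's inequality to $\|\omega_t-\omega_0\|_2$ directly and only annotates one inequality with ``$t\le\tilde T$''; your reduction $r\,\mathds{1}_{\{\|\omega_t-\omega_0\|_2>r\}}\le\|\omega_{t\wedge\tau}-\omega_0\|_2$ via path continuity is the clean way to make all the localized estimates legitimate, and is more careful than what the paper does. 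Second, and more substantively, the noise term. The paper bounds $\bE[\|\int_0^t\sigma_\alpha(\omega_s)\,dW_s\|_2]$ by $\bigl(\bE[\int_0^t\mathrm{Tr}(\Sigma_\alpha(\omega_s))\,ds]\bigr)^{1/2}$ (Cauchy--Schwarz plus a moment inequality from Mao, i.e.\ essentially your It\^o-isometry step), and then discards it entirely by invoking, from the proof of Theorem \ref{th:ybound}, the claim $\int_0^t\mathrm{Tr}(\Sigma_\alpha(\omega_s))\,ds\le 0$ --- which, since the trace of a covariance matrix is nonnegative, amounts to asserting the noise vanishes identically; this is what makes the closed-form right-hand side of \eqref{eq:omegabound} contain no noise contribution at all. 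You instead bound the trace honestly, $\mathrm{Tr}(\Sigma_\alpha(\omega_s))\le\mathrm{Lip}(\nabla\ell)^2\|\alpha h(\omega_s)-h^\star\|_2^2\le\tfrac{2}{m}\mathrm{Lip}(\nabla\ell)^2\,\bar R(\alpha h(\omega_s))$, and use the decay from Theorem \ref{th:ybound}, paying an extra additive term of order $\sqrt{\eta_\alpha}/(\alpha r)$. What each approach buys: the paper obtains the exact displayed constant in \eqref{eq:omegabound}, but only by leaning on that questionable sign claim; you obtain the $\lesssim 1/\alpha$ scaling through a self-contained and sound argument, at the price of not literally reproducing \eqref{eq:omegabound} (your final bound has the drift term plus the noise term, as you yourself acknowledge). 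Two caveats on your route: the trace bound needs interpolation, $\nabla\ell(x,h^\star)=0$ for all $x$, which is not listed among Assumptions \ref{assumption:l}--\ref{assumption:interchangeability} (though the paper's own proof of Theorem \ref{th:ybound} implicitly relies on it as well, through $\ell^\star(x)$); and the final $\lesssim 1/\alpha$ statement additionally requires $\eta_\alpha$ to remain bounded in $\alpha$, which holds in the regimes the paper considers (Appendix \ref{appendix:snn}) but should be stated.
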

\begin{proof}[Proof of Theorem \ref{thm:omegaBound}]
    To prove the above statement, we will first use \eqref{eq:SDE} and rewrite it to get an upper bound on the expected distance of $\omega_t$ from initialization. The resulting terms can then be further bounded by using the regularity of the loss function from Assumption \ref{assumption:l} and the results from Theorem \ref{th:ybound}. We connect the expected distance to the probability of exiting a ball around the initialization using Markov's inequality by $\bP(\|\omega_t-\omega_0\|>\epsilon)\leq \frac{\bE_\omega[\|\omega_t-\omega_0\|]}{\epsilon}$. Using \eqref{eq:SDE}, we can rewrite the distance of $\omega_t$ to the origin
    \begin{align*}
        \|\omega_t-\omega_0\|_2&=\left\|\int_0^t-\frac{1}{\alpha}D^Th(\omega_t)\nabla R(\alpha h(\omega_t))dt+\frac{\sqrt{\eta_\alpha}}{\alpha}\int_0^t\sigma_\alpha(\omega_t)dB_t\right\|_2\\
        &\leq \frac{1}{\alpha}\int_0^t\|D^Th(\omega_t)\|_\bF\|\nabla R(\alpha h(\omega_t))\|_2dt+\frac{\sqrt{\eta_\alpha}}{\alpha}\|\int_0^t\sigma_\alpha(\omega_t) dB_t\|_2
    \end{align*}
    Using Cauchy-Schwarz and Theorem 5.21 in \citet{mao2007stochastic} subsequently, we obtain
    \begin{align*}
        \bE_\omega[\|\omega_t-\omega_0\|] &\leq \frac{\bE_\omega[\int_0^t\|D^Th(\omega_t)\|_\bF\|\nabla R(\alpha h(\omega_t))\|_2dt]}{\alpha}+\frac{\sqrt{\eta_\alpha}\bE_\omega[\|\int_0^t\sigma_\alpha(\omega_t) dB_t\|_2]}{\alpha}\\
        &\leq \frac{\bE_\omega[\int_0^t\|D^Th(\omega_t)\|_\bF\|\nabla R(\alpha h(\omega_t))\|_2dt]}{\alpha}+\frac{\left[\bE_\omega[\int_0^t\mathrm{Tr}(\Sigma_\alpha(\omega_s))ds]\right]^\frac{1}{2}}{\alpha}.\\
    \end{align*}
    We will first apply Markov's inequality to obtain
    \begin{align}\label{eq:splitprobability}
        \bP(\|\omega_t-\omega_0\|_2\geq r)\leq \frac{\bE_\omega[\int_0^t\|D^Th(\omega_t)\|_\bF\|\nabla R(\alpha h(\omega_t))\|_2dt]}{r\alpha}+\frac{\left[\bE_\omega[\int_0^t\mathrm{Tr}(\Sigma_\alpha(\omega_s))ds]\right]^\frac{1}{2}}{r\alpha}.
    \end{align}
    We will continue by first bounding the first part of the right-hand side in Equation \ref{eq:splitprobability}.
    \begin{align*}
        \frac{\bE_\omega[\int_0^t\|D^Th(\omega_t)\|_\bF\|\nabla R(\alpha h(\omega_t))\|_2 dt]}{r\alpha}
        &\overset{t\leq \tilde{T}}{\leq} \frac{1}{\alpha r}\|Dh(\omega_0)\|_\bF \mathrm{Lip}(\nabla R)\bE_\omega[\int_0^t\|\alpha h(\omega_t)-h^\star\|_2 dt] \\
        &= \frac{1}{\alpha r}\|Dh(\omega_0)\|_\bF \mathrm{Lip}(\nabla R)\int_0^t\sqrt{\bE_\omega[\|\alpha h(\omega_t)-h^\star\|_2 ]^2} dt\\
        &\overset{(\spadesuit)}{\leq} \frac{1}{\alpha r}\|Dh(\omega_0)\|_\bF \mathrm{Lip}(\nabla R)\int_0^t\sqrt{\bE_\omega[\|\alpha h(\omega_t)-h^\star\|_2^2]} dt\\
        &\overset{(\clubsuit)}{\leq}\frac{1}{\alpha r}\|Dh(\omega_0)\|_\bF \mathrm{Lip}(\nabla R)\int_0^t \sqrt{\frac{\mathrm{Lip}(\nabla R)}{m} \bE_\omega[\|\alpha h(\omega_0)-h^\star\|^2_2]\exp(-2m\lambda^2t)}dt\\
        &=\frac{1}{\alpha r}\|Dh(\omega_0)\|_\bF \mathrm{Lip}(\nabla R)\sqrt{\frac{\mathrm{Lip}(\nabla R)}{m} \bE_\omega[\|\alpha h(\omega_0)-h^\star\|^2_2]}\int_0^t \exp(-m\lambda^2t)dt\\
        &\leq \frac{1}{\alpha r}\|Dh(\omega_0)\|_\bF \mathrm{Lip}(\nabla R)\frac{\sqrt{\mathrm{Lip}(\nabla R)\bE_\omega [\|\alpha h(\omega_0)-h^\star\|^2_2]}}{m^{\frac{3}{2}}\lambda^2},
    \end{align*}
    where $(\spadesuit)$ follows from Jensen's inequality and $(\clubsuit)$ follows from Corollary \ref{corollary:ybound}. Next, we will bound the second term from the right-hand side of \ref{eq:splitprobability}. We start by bounding
    \begin{align*}
        \int_0^t Tr(\Sigma_\alpha(\omega_s))ds&=\int_0^t Tr\left(\bE_x[\nabla\ell(x,\alpha h(\omega_s)\nabla^T\ell(x,\alpha h(\omega_s))]-R(\alpha h(\omega_s)R^T(\alpha h(\omega_s)\right)ds\\
        &\leq \int_0^tTr\left(\bE_x[\nabla\ell(x,\alpha h(\omega_s)\nabla^T\ell(x,\alpha h(\omega_s))]\right)ds\\
        &=\int_0^t\bE_x[\|\nabla \ell (x,\alpha h(\omega_s))\|_2^2]ds\\
        &\overset{(\diamondsuit)}{\leq} \int_0^t \mathrm{Lip}(\nabla\ell)^2\bE_x[\|\alpha h(\omega_s)-h^\star\|_\bF^2]ds\\
        &\overset{(\clubsuit)}{\leq}\frac{\mathrm{Lip}(\nabla\ell)^2\mathrm{Lip}(\nabla R)}{m}\bE_\omega[\|\alpha h(\omega_0)-h^\star\|_\bF^2]\int_0^t \exp(-m\lambda^2s)ds,
    \end{align*}
    where we used the smoothness of $\ell$ in $\diamondsuit$ and Corollary \ref{corollary:ybound} in $(\clubsuit)$. We can calculate that the value of $\int_0^t\exp(-m\lambda^2s)ds\leq \frac{1}{m\lambda^2}$, which gives us the following upper bound
    \begin{equation*}
        \frac{\sqrt{\bE_\omega\left[\int_0^tTr(\Sigma_\alpha(\omega_s))ds\right]}}{r\alpha}\leq \frac{\mathrm{Lip}(\nabla\ell)}{r\alpha m \lambda}\sqrt{\mathrm{Lip}(\nabla R)\bE_\omega[\|\alpha h(\omega_0)-h^\star\|_\bF^2]}.
    \end{equation*}
    By choosing an initialization such that $h(\omega_0)=0$, we then get 
    $$\bP(\|\omega_t-\omega_0\|_2>r)\leq\frac{1}{\alpha r}\left(\frac{\|Dh(\omega_0)\|_\bF \mathrm{Lip}(\nabla R)}{m^{\frac{3}{2}}\lambda^2}+\frac{\mathrm{Lip}(\nabla \ell)}{m\lambda}{}\right)\sqrt{\mathrm{Lip}(\nabla R)\bE_\omega [\|h^\star\|^2_2]}.$$
\end{proof}
In the following corollary, we will show that by choosing $\alpha$ appropriately, $\bP(\tau<T)$ is arbitrarily small. This implies that we can effectively use our non-asymptotic bound that we established in Theorem \ref{th:ybound} and Corollary \ref{corollary:ybound} to get an upper bound on the training error with arbitrary large probability (depending on $\alpha$). 
\begin{corollary}\label{corollary:infiniteEscape}
    Under the same assumptions as Theorem \ref{thm:omegaBound}, it follows for any $T>0$
    \begin{align}
        \bP(\tau<T)\leq\mathcal{O}(\alpha^{-1}).
    \end{align}
\end{corollary}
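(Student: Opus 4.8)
The plan is to upgrade the fixed-time estimate of Theorem~\ref{thm:omegaBound} into a statement about the first-exit event $\{\tau<\infty\}$. The entry point is the observation that the events $\{\tau\le T\}$ increase to $\{\tau<\infty\}$ as $T\to\infty$, so by continuity of $\bP$ from below we have $\bP(\tau<\infty)=\lim_{T\to\infty}\bP(\tau\le T)$. It therefore suffices to bound $\bP(\tau\le T)$ by $\mathcal{O}(\alpha^{-1})$ \emph{uniformly} in $T$, and then pass to the limit.

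First I would exploit path continuity of $\omega_t$. Since $t\mapsto\|\omega_t-\omega_0\|_2$ is continuous and $\tau$ is the first time it exceeds $r$, on $\{\tau<\infty\}$ the path hits the boundary exactly, i.e. $\|\omega_\tau-\omega_0\|_2=r$. Consequently, for each fixed $T>0$ the stopped process obeys $\{\tau\le T\}\subseteq\{\|\omega_{\tau\wedge T}-\omega_0\|_2\ge r\}$, whence by Markov's inequality $\bP(\tau\le T)\le\tfrac{1}{r}\bE[\|\omega_{\tau\wedge T}-\omega_0\|_2]$. This reduces the problem to estimating the expected displacement of the process evaluated at the bounded stopping time $\tau\wedge T$, rather than at a deterministic time.

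Next I would run the computation from the proof of Theorem~\ref{thm:omegaBound} verbatim, but with the deterministic time $t$ replaced by $\tau\wedge T$. This substitution is legitimate precisely because $\tau\wedge T\le\tau$, so the NTK lower bound $\lambda^2 Id\preceq Dh(\omega_s)D^Th(\omega_s)$ and the curvature control of Assumption~\ref{assumption:combined} remain valid on all of $[0,\tau\wedge T]$; the $L^1$-bound on the stochastic integral (Theorem 5.21 in \citet{mao2007stochastic}) continues to apply at a bounded stopping time, and the drift term is controlled through Corollary~\ref{corollary:ybound} exactly as before. The key point is that the time integral is dominated by $\int_0^\infty\exp(-m\lambda^2 s)\,ds=\tfrac{1}{m\lambda^2}$, so the resulting bound $\bE[\|\omega_{\tau\wedge T}-\omega_0\|_2]\le\tfrac{1}{\alpha}\|Dh(\omega_0)\|_F\,\mathrm{Lip}(\nabla R)\tfrac{\sqrt{\mathrm{Lip}(\nabla R)\bE[\|h^\star\|_2^2]}}{m^{3/2}\lambda^2}$ carries a constant that is independent of $T$. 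Dividing by $r$ gives $\bP(\tau\le T)\le\mathcal{O}(\alpha^{-1})$ uniformly in $T$, and letting $T\to\infty$ yields $\bP(\tau<\infty)\le\mathcal{O}(\alpha^{-1})$.

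The main obstacle, and the only genuinely nontrivial step, is the passage from a pointwise-in-time probability bound to a first-exit bound: Theorem~\ref{thm:omegaBound} only controls $\bP(\|\omega_t-\omega_0\|_2>r)$ for each individual $t$, and one cannot simply union these uncountably many events. The resolution is the stopping-time reformulation $\{\tau\le T\}=\{\|\omega_{\tau\wedge T}-\omega_0\|_2\ge r\}$ (valid up to a probability-zero boundary effect), which packages the entire trajectory up to time $T$ into a single random endpoint, combined with the fact that the displacement estimate is uniform in time thanks to the convergent exponential integral. These two features together make the limit $T\to\infty$ harmless and close the argument.
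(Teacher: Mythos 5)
Your proof is correct and rests on the same pillar as the paper's own argument: the right-hand side of the bound in Theorem~\ref{thm:omegaBound} is uniform in $t$, so it transfers to the first-exit event. In fact, your stopped-process formulation ($\{\tau\le T\}\subseteq\{\|\omega_{\tau\wedge T}-\omega_0\|_2\ge r\}$, Markov's inequality at the bounded stopping time $\tau\wedge T$, then $T\to\infty$ by continuity from below) is a more careful rendering of the paper's two-line proof, which informally evaluates the fixed-time bound at the random time $\tau$ via $\bP(\tau<\infty)\le\bP(\|\omega_\tau-\omega_0\|_2>r)$.
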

\begin{proof}[Proof of Corollary \ref{corollary:infiniteEscape}]
In the following let $\tilde{\tau}:=\min\{\tau,T\}$. Since the right-hand side of \eqref{eq:omegabound} does not depend on $t$, we get
    \begin{align*}
        \bP(\tau < T) &= \bP(\tilde{\tau} < T)\\
        &\leq\bP(||\omega_{\tilde{ \tau }}-\omega_0||\geq r)\\
        &\leq\mathcal{O}(\alpha^{-1}),
    \end{align*}
    where we used Theorem \ref{thm:omegaBound}
\end{proof}
The result provided by Corollary \ref{corollary:infiniteEscape} is the last ingredient needed to return to the problem introduced in Remark \ref{remark:ConvergenceinExpectation} and solve it.
\begin{theorem}[$L^1$-convergence of SGLD]\label{th:hoelder}
    Under the same assumptions as Theorem \ref{thm:omegaBound}, it holds that 
    $$\bE_\omega[\bar{R}(\alpha h(\omega_t))]\leq \exp(-\lambda^2mt)+\mathcal{O}(\alpha^{-\frac{1}{q}}),$$
    for any $q>1$ and $t \leq T$ for any $T>0$.
\end{theorem}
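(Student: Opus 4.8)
The plan is to split the expectation at the first-exit time $\tau$ and treat the two pieces by completely different means. Writing
$$\bE[\bar{R}(\alpha h(\omega_t))]=\bE[\bar{R}(\alpha h(\omega_t))\mathds{1}_{\{\tau>t\}}]+\bE[\bar{R}(\alpha h(\omega_t))\mathds{1}_{\{\tau\leq t\}}],$$
the first (good-event) term is controlled by Theorem \ref{th:ybound}, which applies precisely because $\{\tau>t\}=\{t<\tau\}$ is the event on which the NTK lower bound and the Hessian control of Assumption \ref{assumption:combined} remain valid. Multiplying the pathwise bound from that proof by $\mathds{1}_{\{\tau>t\}}$ and using $\bE[\cE_t\mathds{1}_{\{\tau>t\}}]\leq\bE[\cE_t]=1$ gives
$$\bE[\bar{R}(\alpha h(\omega_t))\mathds{1}_{\{\tau>t\}}]\leq\bar{R}(\alpha h(\omega_0))\exp(-2\lambda^2mt),$$
which, after the normalization $h(\omega_0)=0$ (so that $\bar{R}(\alpha h(\omega_0))=\bar{R}(0)$ is an $\alpha$-independent constant), yields the $\exp(-2\lambda^2mt)$ term. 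This recovers exactly the quantity $\bE[\bar{R}(\alpha h(\omega_t))\mathds{1}_{\{\tau=\infty\}}]$ flagged in Remark \ref{remark:ConvergenceinExpectation}, now upgraded to the finite-horizon event $\{\tau>t\}$.

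For the second (bad-event) term I would apply Hölder's inequality with conjugate exponents $p,q>1$, $\tfrac{1}{p}+\tfrac{1}{q}=1$, so that $p=\tfrac{q}{q-1}$:
$$\bE[\bar{R}(\alpha h(\omega_t))\mathds{1}_{\{\tau\leq t\}}]\leq\left(\bE[\bar{R}(\alpha h(\omega_t))^p]\right)^{1/p}\bP(\tau\leq t)^{1/q}.$$
Since $\{\tau\leq t\}\subseteq\{\tau<\infty\}$, Corollary \ref{corollary:infiniteEscape} gives $\bP(\tau\leq t)^{1/q}\leq\bP(\tau<\infty)^{1/q}=\mathcal{O}(\alpha^{-1/q})$, which is exactly the second term in the claim. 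Everything therefore reduces to showing that the Hölder factor $(\bE[\bar{R}(\alpha h(\omega_t))^p])^{1/p}$ is bounded uniformly in $\alpha$; any $\alpha$-independent bound closes the proof, and letting $q$ range over $(1,\infty)$ delivers the statement for every $q>1$.

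The main obstacle is precisely this uniform-in-$\alpha$ $L^p$ moment bound, since on $\{\tau\leq t\}$ the parameters may have left $B_r(\omega_0)$, where neither the NTK lower bound nor Assumption \ref{assumption:combined} is guaranteed, so the good-event estimates of Theorem \ref{th:ybound} and Corollary \ref{corollary:ybound} do not transfer. To obtain it I would first use the smoothness of $R$ from Lemma \ref{lemma:Rproperties}, $\bar{R}(\alpha h(\omega_t))^p\leq(\tfrac{1}{2}\mathrm{Lip}(\nabla R))^p\|\alpha h(\omega_t)-h^\star\|_2^{2p}$, reducing the task to $\sup_{0\leq s\leq t}\bE[\|\alpha h(\omega_s)-h^\star\|_2^{2p}]=\mathcal{O}(1)$. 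I would then write the It\^o dynamics of the predictor process $\alpha h(\omega_t)$ induced by \eqref{eq:SDE}, whose drift $-Dh(\omega_t)D^Th(\omega_t)\nabla R(\alpha h(\omega_t))$ and diffusion $\sqrt{\eta_\alpha}Dh(\omega_t)\sigma_\alpha(\omega_t)$ have at most linear growth in $\|\alpha h(\omega_t)-h^\star\|_2$ once the Lipschitz bounds $\mathrm{Lip}(\nabla R)$, $\mathrm{Lip}(\nabla\ell)$ (the latter controlling the growth of $\Sigma_\alpha$) and the boundedness of $\|Dh\|$ are used; a standard SDE moment estimate (It\^o's formula for $\|\cdot-h^\star\|_2^{2p}$, the Burkholder--Davis--Gundy inequality for the martingale part, and Grönwall's inequality) then produces a finite bound $C(t,p)$. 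The genuinely delicate point is the uniformity in $\alpha$: one must check that the scaling factors do not inflate the Grönwall constant, and the key observation is that Assumption \ref{assumption:combined} caps $\eta_\alpha$ at $\mathcal{O}(1)$ (because $\nabla^2_\omega R(\alpha h)\preceq\frac{\alpha^2}{\eta_\alpha}Id$ with $\nabla^2_\omega R(\alpha h)$ of order $\alpha^2$ forces $\eta_\alpha=\mathcal{O}(1)$), so the effective noise coefficient $\sqrt{\eta_\alpha}Dh\sigma_\alpha$ stays $\mathcal{O}(1)$; the global (rather than merely local) curvature and growth control needed here is supplied by Appendix \ref{appendix:snn} for shallow networks and, via Remark \ref{remark:DNNs}, for deep networks. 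With this uniform moment bound in hand, combining the three displays gives $\bE[\bar{R}(\alpha h(\omega_t))]\leq\exp(-2\lambda^2mt)+\mathcal{O}(\alpha^{-1/q})$ for every $q>1$.
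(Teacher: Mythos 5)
Your skeleton coincides with the paper's: the paper splits the expectation at the stopping time (it uses $\{\tau=\infty\}$ versus $\{\tau<\infty\}$; your finite-horizon split $\{\tau>t\}$ versus $\{\tau\le t\}$ is, if anything, the cleaner version), controls the good event exactly as you do via the pathwise bound of Theorem \ref{th:ybound} and $\bE[\cE_t]=1$, and then applies H\"older's inequality together with Corollary \ref{corollary:infiniteEscape} to extract the $\mathcal{O}(\alpha^{-1/q})$ factor. Up to that point your argument is correct and is essentially the paper's argument.

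The genuine gap is the step you yourself identify as the remaining reduction: the uniform-in-$\alpha$ bound on $\bE[\bar{R}(\alpha h(\omega_t))^p]$, which is the actual crux of the theorem, and your proposed route to it would fail under the paper's hypotheses. You invoke ``boundedness of $\|Dh\|$'', but that is not assumed anywhere: Assumption \ref{assumption:l} only makes $Dh$ Lipschitz, so $\|Dh(\omega)\|$ grows linearly in $\|\omega-\omega_0\|$, and precisely on the event $\{\tau\le t\}$ where the moment bound is needed the parameters have left $B_r(\omega_0)$, so no a priori bound on $\|\omega_t-\omega_0\|$ is available. The drift of the predictor process, $-Dh(\omega_t)D^Th(\omega_t)\nabla R(\alpha h(\omega_t))$, then grows like $(1+\|\omega_t-\omega_0\|)^2\,\|\alpha h(\omega_t)-h^\star\|_2$: the system is coupled and superlinear, so the standard It\^o/BDG/Gr\"onwall machinery does not close (moments of SDEs with superlinearly growing coefficients can blow up in finite time). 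Your predictor dynamics also omit the It\^o correction term $\frac{\eta_\alpha}{2\alpha}\mathrm{Tr}\big(\sigma_\alpha^T(\omega_t)\,D^2h(\omega_t)\,\sigma_\alpha(\omega_t)\big)\,dt$ forced by the nonlinearity of $h$. Finally, even in settings where your route does work (bounded $Dh$ and bounded $h$, e.g.\ shallow $\tanh$ networks), Gr\"onwall yields a constant $C(t,p)$ growing in $t$, so you would only obtain $\bE[\bar{R}(\alpha h(\omega_t))]\le e^{-2\lambda^2 mt}+C(t)\,\alpha^{-1/q}$, a weaker, non-time-uniform statement. The paper instead never leaves the exponential representation from Theorem \ref{th:ybound}: it raises the pathwise identity to the $p$-th power, discards the nonpositive drift exponential, bounds the Hessian-trace exponential by one, and controls the exponential of $pM_t-\frac{p}{2}\langle M\rangle_t$, arriving at the time-uniform, $\alpha$-independent bound $\bE[\bar{R}(\alpha h(\omega_t))^p]\le \bar{R}(\alpha h(\omega_0))^p e^p$. (Be aware that even this step is delicate: $\exp(pM_t-\frac{p}{2}\langle M\rangle_t)=\cE_t^p$ is not itself a martingale, and one must factor it as $\exp(pM_t-\frac{p^2}{2}\langle M\rangle_t)\exp(\frac{p^2-p}{2}\langle M\rangle_t)$ and control the quadratic-variation factor; moreover the Hessian-trace bound uses Assumption \ref{assumption:combined}, which is only guaranteed inside $B_r(\omega_0)$. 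So the $L^p$ moment estimate is where all the difficulty of this theorem lives, and it is exactly what your proposal leaves open.)
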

\begin{proof}[Proof of Theorem \ref{th:hoelder}]
    To prove the above result, we will decompose the left-hand side of the result as follows
    $$\bE_\omega[\bar{R}(\alpha h(\omega_t))]=\bE_\omega[\bar{R}(\alpha h(\omega_t)\mathds{1}_{\{\tau\geq T\}}]+\bE_\omega[\bar{R}(\alpha h(\omega_t)\mathds{1}_{\{\tau<T\}}].$$
    Theorem \ref{th:ybound} already proved that 
    $$\bE_\omega[\bar{R}(\alpha h(\omega_t)\mathds{1}_{\{\tau\geq T\}}]\leq \exp(-\lambda^2mt),$$
    which implies that we only need to show 
    $$\bE_\omega[\bar{R}(\alpha h(\omega_t)\mathds{1}_{\{\tau<T\}}]\leq \mathcal{O}(\alpha^{-\frac{1}{q}}).$$
    Using Hölder's inequality with the Hölder conjugate pair $(p,q)$, we get 
    \begin{align}
        \bE_\omega[\bar{R}(\alpha h(\omega_t)\mathds{1}_{\{\tau<T\}}]&\leq \bE_\omega[\bar{R}(\alpha h(\omega_t))^p]^\frac{1}{p}\bE_\omega[\mathds{1}_{\{\tau<T\}}^q]^\frac{1}{q}\\
        &\overset{\spadesuit}{=}\bE_\omega[\bar{R}(\alpha h(\omega_t)^p]^\frac{1}{p}\bE_\omega[\mathds{1}_{\{\tau<T\}}]^\frac{1}{q}\\
        &=\bE_\omega[\bar{R}(\alpha h(\omega_t)^p]^\frac{1}{p}\bP(\tau<T)^\frac{1}{q}\\
        &\overset{\clubsuit}{\leq}\bE_\omega[\bar{R}(\alpha h(\omega_t)^p]^\frac{1}{p}\cO(\alpha^{-\frac{1}{q}}),\label{equation:Hoelder}
    \end{align}
    where we used that the image of the characteristic function $\mathds{1}_{\{\tau<\infty\}}$ is $\{0,1\}$ in $\spadesuit$, and Corollary \ref{corollary:infiniteEscape} in $\clubsuit$. Additionally, it holds that
    \begin{align*}
        \bar{R}(\alpha h(\omega_t))^p=&\bar{R}(\alpha h(\omega_0))^p \exp\left(-p\int_0^t\frac{\|D^Th(\omega_s)\nabla R(\alpha h(\omega_s))\|_2^2}{\bar{R}(\alpha h(\omega_s))}ds\right)\\
        &\exp\left(\frac{p\eta_\alpha}{2\alpha}\int_0^tTr\left(\frac{\|\sigma_\alpha(\omega_s)\|_{\nabla^2_\omega R(\alpha h(\omega_s))}}{\bar{R}(\alpha h(\omega_s))}\right)ds\right) e^p\cE_t\\
        \leq& \bar{R}(\alpha h(\omega_0))^p e^p\cE_t,
    \end{align*}
    where we used an identical calculation to Theorem \ref{th:ybound}, except for the term 
    $\exp\left(-p\int_0^t\frac{\|D^Th(\omega_s)\nabla R(\alpha h(\omega_s))\|_2^2}{\bar{R}(\alpha h(\omega_s))}ds\right),$
    which we upper bounded by $1$. Taking the expectation on both sides leads to 
    $$\bE_\omega[\bar{R}(\alpha h(\omega_t))^p]\leq \bar{R}(\alpha h(\omega_0))^p e^p\bE_\omega[\cE_t]=\bar{R}(\alpha h(\omega_0))^p e^p.$$
    Plugging this result into \eqref{equation:Hoelder} concludes the proof.
\end{proof}
Since we have proven in Corollary \ref{corollary:infiniteEscape} that, with arbitrarily large probability, $\omega_t$ stays in $B_r(\omega_0)$, we can use this argument to motivate the proximity of the linearized dynamics $\bar{\omega}_t$ to $\omega_t$. As mentioned before, \citet{ChizatBachlazyTraining} uses the scaling factor $\alpha$ to prove the proximity of $h$ to the linearized dynamics defined by $\bar{h}(\omega):=h(\omega_0)+Dh(\omega_0)(\omega-\omega_0)$, which gives rise to the following linear stochastic gradient Langevin dynamics:
$$d\bar{\omega}_t=-\frac{1}{\alpha}D^Th(\omega_0)\nabla R(\alpha\bh(\bomega_t))dt+\frac{\sqrt{\eta_\alpha}}{\alpha}(\bSigma(\bomega_t))^\frac{1}{2}dW_t$$
First, we demonstrate that the above results also apply to the linearized neural network $\bar{\omega}$.
\begin{corollary}\label{corollary:linearizedBounds}
    Assuming that Assumptions \ref{assumption:l}-\ref{assumption:interchangeability} hold for the non-linearized neural network. In addition, assume that $h(\omega_0)=0$. Then it follows that 
    \begin{equation}\label{eq:linearizedybound}
        \bE_\omega[\|\alpha \bar{h}(\bar{\omega}_t)-h^\star\|_2^2]\leq \frac{\mathrm{Lip}(\nabla R)}{m}\|h^\star\|^2_2\exp(-m\lambda^2t),
    \end{equation}
    for all $t\leq \tau$. In addition, it holds that 
    \begin{equation*}\label{eq:linearizedomegabound}
        \bP(\|\bar{\omega}_t-\omega_0\|_2>r)\leq\frac{1}{\alpha r}\left(\frac{\|Dh(\omega_0)\|_\bF \mathrm{Lip}(\nabla R)}{m^{\frac{3}{2}}\lambda^2}+\frac{\mathrm{Lip}(\nabla \ell)}{m\lambda}{}\right)\sqrt{\mathrm{Lip}(\nabla R)\bE_\omega [\|h^\star\|^2_2]}=\cO\left(\alpha^{-1}\right).
    \end{equation*}
\end{corollary}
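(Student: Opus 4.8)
The plan is to recognize that the linearized flow $\bar\omega_t$ is nothing but an instance of the general scaled SGLD \eqref{eq:SDE} in which the model $h$ is replaced by the affine model $\bar h(\omega)=h(\omega_0)+Dh(\omega_0)(\omega-\omega_0)$, with associated covariance $\bar\Sigma$. Consequently, once I verify that $\bar h$ satisfies Assumptions \ref{assumption:l}--\ref{assumption:interchangeability}, the two claimed bounds follow by invoking Corollary \ref{corollary:ybound} and Theorem \ref{thm:omegaBound} verbatim, with $\bar h$, $\bar\omega_t$, $D\bar h$ in place of $h$, $\omega_t$, $Dh$.

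First I would check the three assumptions for $\bar h$. The loss-regularity part of Assumption \ref{assumption:l} (Lipschitz-smoothness and $m$-strong convexity of $\ell$) and Assumption \ref{assumption:interchangeability} concern only $\ell$ and $R$ on the output space $\bF$, and are therefore inherited unchanged. For the model-regularity part of Assumption \ref{assumption:l}, the Jacobian $D\bar h\equiv Dh(\omega_0)$ is constant, hence trivially Lipschitz with $\mathrm{Lip}(D\bar h)=0$, and the tangent kernel at initialization coincides with that of $h$, namely $D\bar h(\omega_0)D^T\bar h(\omega_0)=Dh(\omega_0)D^Th(\omega_0)$, so it shares the same $\lambda^2=\lambda_{\min}(Dh^T(\omega_0)Dh(\omega_0))>0$.

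The only step requiring genuine care is Assumption \ref{assumption:combined}, and this is where the affine structure pays off. Since $\bar h$ is affine, its second derivative vanishes, so the chain rule gives $\nabla^2_\omega R(\alpha\bar h(\omega))=\alpha^2 Dh(\omega_0)^T\nabla^2 R(\alpha\bar h(\omega))Dh(\omega_0)$, with no model-curvature contribution. Using the $\mathrm{Lip}(\nabla R)$-smoothness of $R$ from Lemma \ref{lemma:Rproperties}, this is bounded by $\alpha^2\mathrm{Lip}(\nabla R)\,Dh(\omega_0)^TDh(\omega_0)$ uniformly in $\omega$, so the curvature constraint $\nabla^2_\omega R(\alpha\bar h(\omega))\preceq\frac{\alpha^2}{\eta_\alpha}Id$ reduces to a clean condition on $\eta_\alpha$, $\mathrm{Lip}(\nabla R)$ and $\|Dh(\omega_0)\|_2$ alone, of the same form as the curvature condition already imposed on the full model. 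Note moreover that because $D\bar h(\omega_t)D^T\bar h(\omega_t)=Dh(\omega_0)D^Th(\omega_0)\succeq\lambda^2 Id$ for every $t$, the tangent kernel of the linearized flow never degenerates; the associated first-exit time is effectively $+\infty$, so the exponential bound in fact holds for all $t$, and stating it for $t\leq\tau$ is merely a conservative phrasing consistent with the full-model narrative.

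With the assumptions in hand, I would conclude as follows. Applying Corollary \ref{corollary:ybound} to $\bar h$ and using $\bar h(\omega_0)=h(\omega_0)=0$ yields $\bE[\|\alpha\bar h(\bar\omega_t)-h^\star\|_2^2]\leq\frac{\mathrm{Lip}(\nabla R)}{m}\|h^\star\|_2^2\exp(-2m\lambda^2 t)$, where the $\alpha$-independence of the right-hand side is exactly the $h(\omega_0)=0$ cancellation observed in Corollary \ref{corollary:ybound}. Applying Theorem \ref{thm:omegaBound} to $\bar h$, and using $\|D\bar h(\omega_0)\|_F=\|Dh(\omega_0)\|_F$ together with the shared $\lambda$, reproduces the stated probability bound $\bP(\|\bar\omega_t-\omega_0\|_2>r)\leq\frac{1}{\alpha r}\|Dh(\omega_0)\|_F\mathrm{Lip}(\nabla R)\frac{\sqrt{\mathrm{Lip}(\nabla R)\bE[\|h^\star\|_2^2]}}{m^{3/2}\lambda^2}=\cO(\alpha^{-1})$. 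The main obstacle is thus solely the verification of Assumption \ref{assumption:combined} for $\bar h$; everything else is a direct transcription of the earlier arguments, which go through because their proofs rely only on the SDE form \eqref{eq:SDE}, the strong convexity and smoothness of $R$, and the lower bound $\lambda^2$ on the tangent kernel, all of which survive the passage to the affine model.
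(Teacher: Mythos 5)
Your proposal is correct and takes essentially the same route as the paper, whose entire proof is the single sentence that the result ``is an application of Corollary \ref{corollary:ybound} and Theorem \ref{thm:omegaBound}'' to the linearized dynamics. Your additional work---checking that the affine model $\bar{h}$ inherits Assumptions \ref{assumption:l} and \ref{assumption:interchangeability}, that its constant Jacobian gives the same NTK lower bound $\lambda^2$, and that Assumption \ref{assumption:combined} holds for $\bar{h}$ via the $\mathrm{Lip}(\nabla R)$-smoothness of $R$ under a mild condition on $\eta_\alpha$---simply makes explicit the verification the paper leaves implicit.
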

\begin{proof}[Proof of Corollary \ref{corollary:linearizedBounds}]
    The proof is an application of Corollary \ref{corollary:ybound} and Theorem \ref{thm:omegaBound}.
\end{proof}
From this, we can then use the triangle inequality to get the following results
\begin{corollary}\label{prop:linearizedError}
    By using the bounds from Corollary \ref{corollary:linearizedBounds}, we can show that 
    \begin{equation*}
        \bE_\omega[\|\alpha \bar{h}(\bar{\omega}_t)-\alpha h(\omega_t)\|_\bF]\leq 2\sqrt{\frac{\mathrm{Lip}(\nabla R)}{m}}\|h^\star\|_\bF\exp(-m\lambda^2t).
    \end{equation*}
\end{corollary}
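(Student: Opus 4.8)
The plan is to reduce the claim to the two mean-square bounds already established for the true trajectory $\omega_t$ and the linearized trajectory $\bar\omega_t$, and then glue them together with the triangle inequality through the common reference point $h^\star$. Concretely, I would first write, pointwise,
\[
\|\alpha\bar h(\bar\omega_t)-\alpha h(\omega_t)\|_2\leq \|\alpha\bar h(\bar\omega_t)-h^\star\|_2+\|h^\star-\alpha h(\omega_t)\|_2,
\]
and then take expectations, so that it suffices to bound the two resulting terms $\bE[\|\alpha\bar h(\bar\omega_t)-h^\star\|_2]$ and $\bE[\|\alpha h(\omega_t)-h^\star\|_2]$ separately.

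For each of these $L^1$ quantities I would pass to the corresponding $L^2$ quantity by Jensen's inequality, using $\bE[\|X\|_2]\leq \sqrt{\bE[\|X\|_2^2]}$. The second term is then controlled directly by Corollary \ref{corollary:ybound}: under $h(\omega_0)=0$ this gives $\bE[\|\alpha h(\omega_t)-h^\star\|_2^2]\leq \frac{\mathrm{Lip}(\nabla R)}{m}\|h^\star\|_2^2\exp(-2m\lambda^2 t)$, whose square root is exactly $\sqrt{\frac{\mathrm{Lip}(\nabla R)}{m}}\|h^\star\|_2\exp(-m\lambda^2 t)$. The first term is handled identically via the linearized mean-square bound \eqref{eq:linearizedybound} in Corollary \ref{corollary:linearizedBounds}, which has the same right-hand side. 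Summing the two identical estimates produces the factor of $2$ in the statement.

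The step that requires the most care is not the algebra but ensuring that both mean-square bounds hold on the same event and over the same horizon: \eqref{eq:linearizedybound} is stated for $t\le\tau$, and Corollary \ref{corollary:ybound} is likewise derived under the stopping-time construction of the lazy regime, so I would restrict attention to $t<\tau$ (equivalently, work under the indicator $\mathds{1}_{\{\tau=\infty\}}$ as in Remark \ref{remark:ConvergenceinExpectation}) so that both exponential decay estimates apply simultaneously to the two trajectories. Once both are controlled relative to the common minimizer $h^\star$ on this event, the triangle inequality and the square-root conversion are routine and yield the claimed $\exp(-m\lambda^2 t)$ rate with the stated constant.
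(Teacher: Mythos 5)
Your proposal matches the paper's own proof essentially line for line: the same triangle inequality through the common reference point $h^\star$, the same Jensen step $\bE[\|X\|_2]\leq\sqrt{\bE[\|X\|_2^2]}$, and the same invocation of Corollary \ref{corollary:ybound} and Corollary \ref{corollary:linearizedBounds} to produce the factor of $2$ and the rate $\exp(-m\lambda^2 t)$. Your additional remark that both mean-square bounds must be applied on the common event $t<\tau$ is a point the paper leaves implicit, and it is correctly handled.
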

\begin{proof}[Proof of Proposition \ref{prop:linearizedError}]
    It holds that 
    \begin{align*}
        \bE_\omega[\|\alpha\bar{h}(\bar{\omega}_t)-\alpha h(\omega_t)\|_\bF]&\leq \bE_\omega[\|\alpha\bar{h}(\bar{\omega}_t)-h^\star\|_\bF]+\bE_\omega[\|\alpha h(\omega_t)-h^\star\|_\bF]\\
        &\overset{(\clubsuit)}{\leq} \sqrt{\bE_\omega[\|\alpha\bar{h}(\bar{\omega}_t)-h^\star\|_\bF^2]}+\sqrt{\bE_\omega[\|\alpha h(\omega_t)-h^\star\|_\bF^2]}\\
        &\overset{(\spadesuit)}{\leq} 2\sqrt{\frac{\mathrm{Lip}(\nabla R)}{m}}\|h^\star\|_\bF\exp(-m\lambda^2t),
    \end{align*}
    where we used Jensen's inequality in $(\clubsuit)$, and Corollary \ref{corollary:ybound} and Corollary \ref{corollary:linearizedBounds} in $(\spadesuit)$.
    
\end{proof}
This suggests that, within the lazy training regime, the error between the standard and linearized dynamics decays exponentially over time. For the linearized models, following similar steps, we can obtain equivalent results to Theorem \ref{thm:omegaBound} to prove that $\bP(\|\omega_t-\bar{\omega}_t\|_2>\epsilon)\leq \cO(\frac{1}{\epsilon \alpha})$ as long as both $\omega_t$ and $\bar{\omega}_t$ are in the lazy training regime.
    
\section{Numerical Results}\label{section:nr}
In this section, we apply the results from the last section to a shallow neural network in the teacher-student setting and a deep neural network application. Consider the function defined by 
\begin{equation*}
    \Hat{\phi}(x;\omega^\star):=\sum_{j=1}^{m'} c^\star_j\tanh(\omega^\star_j x),
\end{equation*}
where $\omega^\star = [\omega_1^\star \ldots \omega^\star_{m'}]\in\bR^{d\times m'}$, $c^\star\in\R^{m'}, \ x\in\R^d$. The training data is generated as 
\begin{equation*}
    y_i=\Hat{\phi}(x_i;\omega^\star)+\epsilon_i, \ i=1,...,n,
\end{equation*}
where $\omega^\star\in\R^{d\times m'}$ is a predetermined optimum parameter, $\epsilon_i\sim \cN(0,Id)$ and $x_i\sim\cN(0,Id)$. We consider the empirical mean-squared training error $$\Hat{R}(f):=\frac{1}{n}\sum_{i=1}^n(y_i-f(x_i))^2.$$ We consider a single-hidden layer neural network $\phi$ of width $m>m'$ and $\tanh$ activations. In order to ensure that $\phi(x;\omega_0)=0$ for any $x\in\bR^d$, we use the centered model $\tilde{\phi}(\cdot;\omega)=\phi(\cdot;\omega)-\phi(\cdot;\omega_0)$, following \cite{ChizatBachlazyTraining}. We simulate the stochastic gradient Langevin dynamics using the Euler-Maruyama scheme with a stepsize of $\delta t=10^{-2}$, a noise factor $\eta_\alpha=10^{-2}$, and we repeat the experiment for $\alpha\in\{1/8,8,32,256\}$. The input dimension is $ d=16$, and we chose $m=600 > m'=1$ as well as a training set of size $n=800$. We sample the teacher neural network weights from $\omega^\star\sim \mathrm{Unif} ([0,1]^{16})$ and choose $c^\star=1$. The student neural network is initialised by $\omega_j^0\sim \cN(0,Id)$. We only train the hidden layer weights $\omega_j$ and freeze the output layer weights $c_j$ at initialization. The goal of this simulation is to compare the training loss, the minimum eigenvalue of the NTK, and the distance of the weights from initialization ($\sum_{j=1}^m\|\omega_j-\omega_j^0\|_2$) for different values of $\alpha$. The proofs, which show that this example fulfills all the needed assumptions, were moved to the appendix. The results are shown in Figure \ref{fig:loss}, where the training loss rate exhibits an exponential decrease as the value of $\alpha$ increases. Here, we can also see that, even though we have achieved an exponential bound on the training error, this rate is still relatively slow, since the eigenvalues of the NTK are also small. In Figure \ref{fig:distanceandeigenvalue} (left), we can see that the values of $\omega_t$ remain in the vicinity of the initial values for larger $\alpha$. In accordance with this, the minimum eigenvalue of the NTK (right) does decrease faster for 
smaller values of $\alpha$. For $\alpha=1/8$, the minimum eigenvalue seems to decrease at a (slow) exponential rate. For the larger values of $\alpha$, this value seems to converge to a value strictly larger than 0, reaffirming our finding in the previous section.
\begin{figure}[ht]
    \centering
    \includegraphics[width=0.4\linewidth]{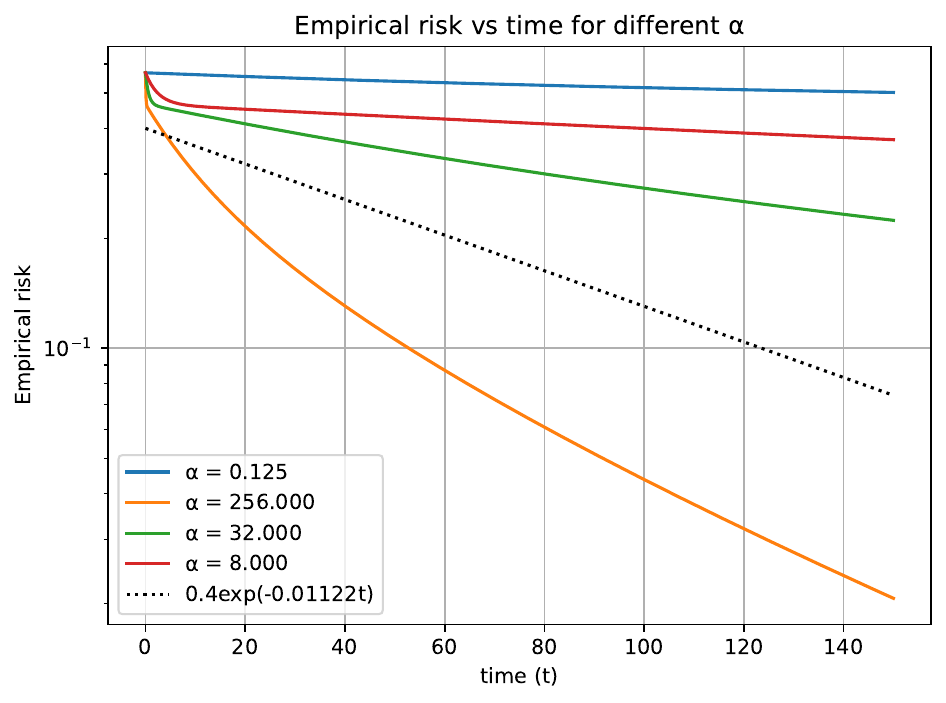}
    \caption{Test-error evolution during training using Euler-Maruyama scheme with stepsize of $\delta t=0.01$ for $\alpha\in\{1/8,8,32,256\}$. We additionally include curve $f(t)$ with the decay rate $\exp(-\lambda^2t)$ (dotted line) as a reference, where $\lambda=0.01122$ is the minimum eigenvalue of the NTK at initialization.}
    \label{fig:loss}
\end{figure}
\begin{figure}[ht]
\begin{subfigure}{.5\textwidth}
  \centering
  \includegraphics[width=.8\linewidth]{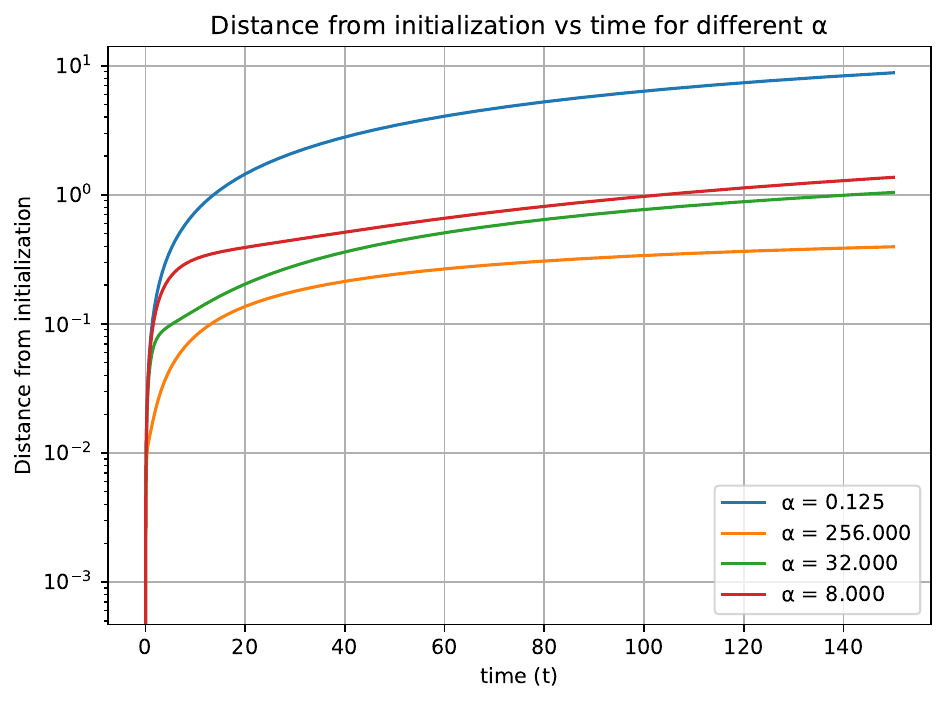}
\end{subfigure}%
\begin{subfigure}{.5\textwidth}
  \centering
  \includegraphics[width=.8\linewidth]{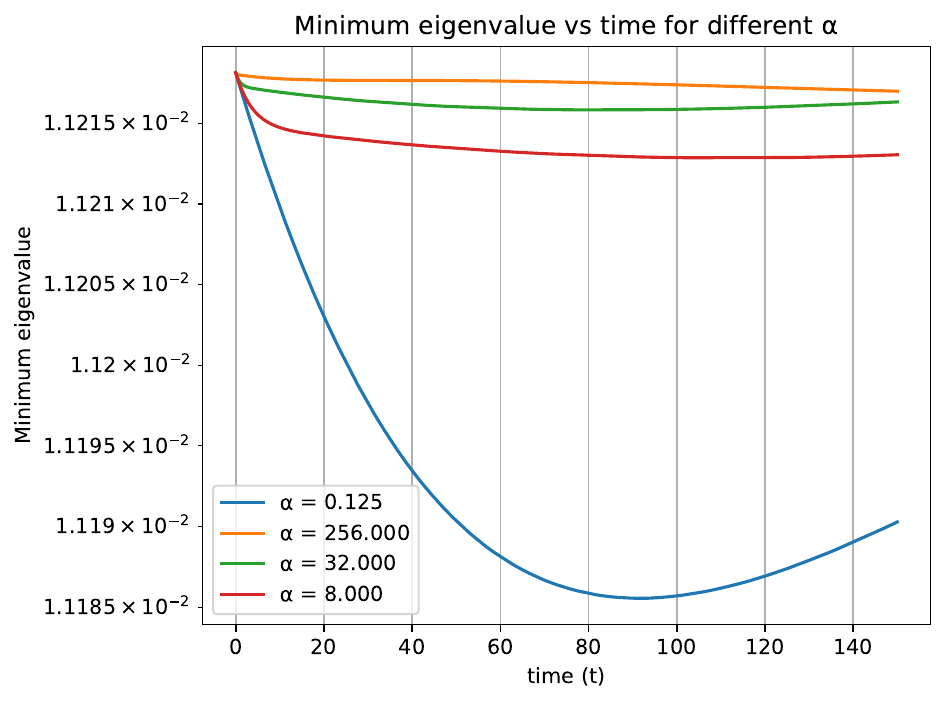}
\end{subfigure}
\caption{Distance from initialization (left) and value of the minimal eigenvalue of the NTK (right) in time for different values $\alpha\in\{1/8,8,32,256\}$.}
\label{fig:distanceandeigenvalue}
\end{figure}
\section{Conclusion}
This work analyzed the lazy regime for stochastic gradient Langevin dynamics in the overparameterized setting. First, we proved an exponential convergence rate for the expected optimality gap. Next, we established probabilistic conditions describing the probability of the system leaving this regime. Additionally, we empirically validated these insights in our numerical results. Interesting future research directions include the analysis of SGLD in the underparameterized regime, which was analyzed in \citet{ChizatBachlazyTraining} for the deterministic gradient flow. Another interesting open question is the convergence study of different continuous-time models for SGD in deep learning, such as \cite{EeSGDtoSDE} based on higher-order approximations, and \citet{gurbuzbalaban2021heavytailphenomenonsgd} that models SGD with heavy-tailed Levy processes.
\newpage
\bibliography{main}
\bibliographystyle{tmlr}
\appendix
\newpage
\section{Comparison to Prior Work}\label{appendix:comparison}
Our work is closely related to \cite{lugosi2024convergencecontinuoustimestochasticgradient} and \cite{ChizatBachlazyTraining}. In this section, we explain the main differences and technical challenges in our work compared to these works. The original works in the lazy training regime, such as \cite{ChizatBachlazyTraining, du2018gradient}, consider deterministic dynamics, which corresponds to full-batch gradient descent. On the other hand, stochastic gradient descent (SGD) is widely used in practice because of its computational efficiency in large-scale problems. Motivated by this, we analyze continuous-time approximations of SGD by utilizing It\^{o} stochastic differential equations (SDEs), which makes the analysis substantially different and more challenging compared to the ODEs studied in \cite{ChizatBachlazyTraining}. To be more concrete, the dynamics presented in this previous work can be rewritten as 
$$d\omega_t=-\frac{1}{\alpha}D^Th(\omega_t)\nabla_h R(\alpha h(\omega_t))dt.$$
The dynamics that are treated in this publication 
$$d\omega_t=-\frac{1}{\alpha}D^Th(\omega_t)\nabla_h R(\alpha h(\omega_t))dt+\frac{\sqrt{\eta_\alpha}}{\alpha}(\Sigma_\alpha(\omega_t))^{\frac{1}{2}}dW_t,$$
include the additional stochastic term that models subsampling in stochastic learning algorithms such as stochastic gradient descent. The analysis in \cite{ChizatBachlazyTraining} is not directly applicable to the stochastic setting due to the diffusion term that accounts for stochasticity, which requires different and more sophisticated analysis tools to control the diffusion term.
On the other hand, \cite{lugosi2024convergencecontinuoustimestochasticgradient} considers stochastic dynamics similar to our setting. The key difference in our work is that we analyze the training dynamics of deep neural networks in the lazy training regime, characterized by an output scaling factor $\alpha > 0$ that controls the parameter movement. This scaling factor and the analysis of the stopping time $\tau$ (which is a specific case of $\tau_r$ in \cite{lugosi2024convergencecontinuoustimestochasticgradient}) describing the exit time of the lazy training regime, gives us a strictly positive lower bound on $A_{\mathrm{min}}(r,\omega_0)$ which loosens the restriction on $\theta(r,\omega_0,\eta)$ in \cite{lugosi2024convergencecontinuoustimestochasticgradient}. Additionally, by formulating the problem immediately as a supervised learning problem, by considering $\omega\mapsto R(h(\omega))$ instead of $\omega\mapsto f(\omega)$ combined with the lazy training formulation, we make directly verifiable assumptions on the loss function and neural network model in a supervised learning setting, while the assumptions in \cite{lugosi2024convergencecontinuoustimestochasticgradient} depend on more complicated problem parameters. Overall, this slight change to the setting yields more relaxed, easier-to-verify assumptions that achieve similar results.

A new version of \cite{lugosi2024convergencecontinuoustimestochasticgradient} has since been published, which includes similar results for non-linear deep neural networks as well.
There are two additional technical differences that we want to point out. First, for most of our results, we do not have to assume the existence of a global minimizer in the parameter space $\omega^\star$ (see Remark \ref{remark:globalmin}). Secondly, the comparison between the stochastic Langevin dynamics and the linearized dynamics in Corollary \ref{corollary:linearizedBounds} and \ref{prop:linearizedError} as well as separated analysis in the function space (Theorem \ref{thm:omegaBound}) has not been conducted in the stochastic setting.
\begin{table}[h]\label{table:1}
    \centering
    \begin{tabular}{|c|c|c|c|c|}
    \hline
            & \cite{ChizatBachlazyTraining}& \cite{lugosi2024convergencecontinuoustimestochasticgradient} &  \cite{lugosi2025convergencecontinuoustimestochasticgradient} &  Our work \\
            \hline
        stochastic dynamics & & $\times$ & $\times$ & $\times$\\
         lazy training & $\times$ & & & $\times$  \\
         non-linear DNN & $\times$ & & $\times$ & $\times$ \\
         linear comparisson & $\times$ & & & $\times$ \\
         \hline
    \end{tabular}
    \caption{Illustration of the key differences between our work and previous works.}
    \label{tab:placeholder}
\end{table}
\section{Applicability Proofs for Shallow Neural Networks}\label{appendix:snn}
In this section, we will provide the proofs needed to show that the example from Section \ref{section:nr} fulfills Assumption \ref{assumption:l} and \ref{assumption:combined}. Assumption \ref{assumption:interchangeability} follows immediately, since we are in the supervised learning setting, where the sample measure is discrete. 
\begin{proposition}[Smoothness and strong convexity of $\ell$]\label{prop:LipschitzExample}
The loss-function 
$$\ell(x,h)=(y-h(x))^2$$
is strongly convex and its gradient (wrt. $h$) is Lipschitz continuous with respect to $h$.
\end{proposition}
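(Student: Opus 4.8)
The plan is to reduce both assertions to one elementary fact: for the squared loss the gradient is an affine function of the prediction, so the associated Hessian is a positive constant, and a constant Hessian bounds the modulus of strong convexity from below and the Lipschitz modulus of the gradient from above by the \emph{same} number. First I would fix a data point with input $x$ and label $y$ and regard $\ell(x,\cdot)$ as a function of the scalar prediction $v:=h(x)$, i.e.\ $\ell(x,h)=(y-v)^2$. Differentiating once gives $\partial_v\ell = -2(y-v)=2(v-y)$, which is affine in $v$ with slope $2$, and differentiating again gives the constant second derivative $\partial_v^2\ell\equiv 2>0$.

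Both regularity properties then fall out of this one computation. For Lipschitz smoothness I would note that $\nabla\ell(x,h_1)-\nabla\ell(x,h_2)=2\,(h_1(x)-h_2(x))$, whence $\|\nabla\ell(x,h_1)-\nabla\ell(x,h_2)\|_2=2\,|h_1(x)-h_2(x)|\le 2\,\|h_1-h_2\|_\bF$, which is the first inequality of Assumption~\ref{assumption:l} with $\mathrm{Lip}(\nabla\ell)=2$. For strong convexity I would pair the same gradient difference with $h_1-h_2$ to obtain $\big(\nabla\ell(x,h_1)-\nabla\ell(x,h_2)\big)^{\top}(h_1-h_2)=2\,(h_1(x)-h_2(x))^2=2\,\|h_1-h_2\|_\bF^2$, giving $m$-strong convexity with $m=2$. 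Thus $\ell$ is simultaneously $2$-strongly convex and $2$-smooth, and by Lemma~\ref{lemma:Rproperties} the same constants pass to $R$.

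I expect the genuine difficulty here to be structural rather than computational: one must be precise about the space in which convexity is measured. Viewed as an element of the full prediction space, a single term $(y-h(x))^2$ has a rank-one Hessian and is therefore only convex, not strongly convex, in the ambient norm; the $2$-strong convexity lives along the one-dimensional direction the sampled point actually excites, namely the scalar output $v=h(x)$. I would therefore state the constants with respect to $v$ and verify that this is exactly the form invoked in the proof of Theorem~\ref{th:ybound}, where the Polyak--\L ojasiewicz step is applied per sample with $\ell^\star(x)=0$: indeed $\|\nabla\ell(x,v)\|_2^2=4\,(v-y)^2=4\big(\ell(x,v)-\ell^\star(x)\big)$, i.e.\ the PL inequality with $2m=4$. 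Checking this consistency is the step I would write out most carefully; the derivative calculations themselves are routine.
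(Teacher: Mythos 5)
Your proposal is correct and follows essentially the same route as the paper's proof: compute the gradient $\nabla_h\ell(x,h)=2(h(x)-y)$, which is affine in the prediction, and read off the strong-convexity and Lipschitz constants $m=\mathrm{Lip}(\nabla\ell)=2$ directly. Your closing remark about the space in which strong convexity is measured (per-sample scalar output versus the ambient norm on $\bF$, where the Hessian of a single term is rank one) is in fact more careful than the paper, which silently pairs the gradient difference against the evaluations $h_1(x)-h_2(x)$ rather than against $h_1-h_2\in\bF$.
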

\begin{proof}
    The gradient of the loss function is given by 
    $$\nabla_h\ell(x,h)=\nabla_h\|h(x)-y\|^2_2=2(h(x)-y).$$
    We first prove the strong convexity
    \begin{align*}
        (\nabla_h \ell(x,h_1)-\nabla_h^T \ell(x,h_2))(h_1(x)-h_2(x))=&2(h_1(x)-h_2(x))^T(h_1(x)-h_2(x))\\
        =& 2\|h_1(x)-h_2(x)\|_2^2.
    \end{align*}
    The Lipschitz continuity of the gradient follows immediately from the calculation of the gradient
    $$\|\nabla_h\ell(x,h_1)-\nabla_h\ell(x,h_2)\|_2=2\|h_1(x)-h_2(x)\|_2$$
\end{proof}

Next, we will prove the Lipschitz continuity of $D_\omega \phi$.
\begin{proposition}[Lipschitz continuity of $D_\omega \phi$]
    The derivative $D_\omega \phi$ of 
    $$\phi(x;\omega,c):=\frac{1}{\sqrt{m}}\sum_{j=1}^mc_j\sigma(\omega^T x),$$
    where $\sigma=\tanh$ is Lipschitz continuous with respect to $\omega$.
\end{proposition}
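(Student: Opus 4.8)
The plan is to compute the Jacobian $D_\omega\phi$ explicitly and then exploit the fact that $\tanh$ has a bounded second derivative. Writing $\omega=(\omega_1,\dots,\omega_m)$ with $\omega_j\in\bR^d$, so that the full parameter lives in $\bR^p=\bR^{md}$, the partial derivative of $\phi$ with respect to the block $\omega_j$ is $\frac{c_j}{\sqrt m}\sigma'(\omega_j^T x)\,x^T$. Hence $D_\omega\phi(x;\omega,c)$ is the row vector obtained by concatenating these $m$ blocks, and the difference $D_\omega\phi(x;\omega,c)-D_\omega\phi(x;\omega',c)$ is the concatenation whose $j$-th block is $\frac{c_j}{\sqrt m}\bigl(\sigma'(\omega_j^T x)-\sigma'(\omega_j'^T x)\bigr)x^T$. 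Because the blocks are disjoint, the squared Euclidean norm of this difference splits as a sum over $j$, which is what reduces the problem to a per-neuron estimate.

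The key analytic ingredient I would isolate is that $\sigma=\tanh$ is smooth with $\sigma'(z)=\operatorname{sech}^2(z)$ and $\sigma''(z)=-2\tanh(z)\operatorname{sech}^2(z)$, and that $\sigma''$ is bounded: substituting $u=\tanh(z)\in(-1,1)$ gives $\sigma''=-2u(1-u^2)$, whose extremum at $u=\pm 1/\sqrt3$ yields $\sup_z|\sigma''(z)|=\tfrac{4}{3\sqrt3}=:L_{\sigma'}<\infty$. Thus $\sigma'$ is globally $L_{\sigma'}$-Lipschitz. For each neuron $j$ I would then bound $|\sigma'(\omega_j^T x)-\sigma'(\omega_j'^T x)|\le L_{\sigma'}\,|(\omega_j-\omega_j')^T x|\le L_{\sigma'}\|x\|_2\|\omega_j-\omega_j'\|_2$, applying Lipschitzness of $\sigma'$ and then Cauchy–Schwarz.

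Assembling the pieces, the squared norm of the $j$-th block difference is at most $\frac{c_j^2}{m}L_{\sigma'}^2\|x\|_2^4\|\omega_j-\omega_j'\|_2^2$; summing over $j$, using a uniform bound $|c_j|\le C$ on the frozen output weights, and recalling that on a finite training set $\|x\|_2\le B:=\max_i\|x_i\|_2$, gives $\|D_\omega\phi(x;\omega)-D_\omega\phi(x;\omega')\|_2\le \frac{C\,L_{\sigma'}\,B^2}{\sqrt m}\,\|\omega-\omega'\|_2$, so $\mathrm{Lip}(D_\omega\phi)\le \frac{C\,L_{\sigma'}\,B^2}{\sqrt m}$. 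The only genuinely delicate point is the bookkeeping of the block structure: one must sum the per-neuron estimates so that they aggregate into a bound in the full parameter norm $\|\omega-\omega'\|_2$ rather than picking up a spurious factor of $m$, which is why passing through squared norms (where the blocks decouple additively) is the natural route. Everything else is routine, and the finiteness of the constant rests precisely on the boundedness of $\sigma''$ (global Lipschitzness of $\sigma'$) together with the boundedness of the inputs.
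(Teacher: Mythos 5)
Your proof is correct and takes essentially the same route as the paper's: compute the per-block derivative $\frac{c_j}{\sqrt{m}}\sigma'(\omega_j^T x)x^T$, invoke the Lipschitz continuity of $\sigma'=\tanh'$, and apply Cauchy--Schwarz to get the per-neuron estimate. Yours is in fact slightly more complete, since you derive the explicit constant $\sup_z|\sigma''(z)|=\frac{4}{3\sqrt{3}}$ and carry out the block-wise aggregation into the full parameter norm $\|\omega-\omega'\|_2$, both of which the paper leaves implicit.
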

\begin{proof}
    The derivative of $\phi$ with respect to $\omega_{i}$ is given by
    $$D_{\omega_i}\phi(x;\omega,c):=\frac{c_i}{\sqrt{m}}\sigma'(\omega_i^T x) x^T.$$
    We know that the derivative of $\tanh$ is Lipschitz continuous. It follows that
    \begin{align*}
        \|D_{\omega_i}\phi(x;\omega,c)-D_{\omega_i}\phi(y;\tilde{\omega},c)\|&\leq \frac{1}{\sqrt{m}}|c_i|\|\sigma'(\omega_i^Tx)-\sigma'(\tilde{\omega}_i^Tx)\| \|x\|\\
        &\leq \frac{\mathrm{Lip}(\sigma')}{\sqrt{m}}|c_i| \|x\| \|\omega_i^Tx-\tilde{\omega}_i^T x\|\\
        &=\frac{\mathrm{Lip}(\sigma')}{\sqrt{m}}|c_i| \|x\|^2 \|\omega_i-\tilde{\omega}_i \|,
    \end{align*}
    which proves the Lipschitz continuity.
\end{proof}
The only requirement, that still needs to be proven is Assumption \ref{assumption:combined}. In order to do this, we will split the proof in two parts. First we will prove the following helpful lemma.
\begin{lemma}\label{lemma:curgvature}[Curvature of parameter-loss-mapping]
    There exist parameters $\alpha,\eta_\alpha>0$ such that $\nabla^2_\omega(R(\alpha h(\omega)))\preceq \frac{\alpha^2}{\eta_\alpha} Id$.
\end{lemma}
\begin{proof}
    To prove this property, we need to calculate the Hessian of the mapping $\omega \mapsto R(\alpha h(\omega))$ and then determine the maximum eigenvalue of the Hessian. To do this, we will first decompose the Hessian into different parts, and then write the eigenvalues of the components using block-submatrices. 
    The gradient of the above function is given by
    $$\nabla_\omega R(\alpha \phi(x;\omega,c))=\frac{\alpha}{n}\sum_{i=1}^nr_i(\omega)J_i(\omega),$$
    where 
    \begin{align*}
        r_i(\omega):&=\alpha \phi(x_i;\omega,c)-y_i\\
        J_i(\omega)&=\nabla_\omega \phi(x_i;\omega,c).
    \end{align*}
    Using this notation, we can write the Hessian as 
    $$\nabla^2_\omega R(\alpha \phi(x;\omega,c))=\frac{\alpha^2}{n}\sum_{i=1}^nJ_i(\omega)J_i(\omega)^T+\frac{\alpha}{n}\sum_{i=1}^nr_i(\omega)\nabla^2_\omega\phi(x;\omega,c),$$
    which implies that 
    $$\lambda_{max}\left(\nabla^2_\omega R(\alpha \phi(x;\omega,c))\right)\leq\frac{\alpha^2}{n}\sum_{i=1}^n\|J_i(\omega)\|^2+\frac{\alpha}{n}\sum_{i=1}^n|r_i(\omega)|\|\nabla^2_\omega\phi(x;\omega,c)\|.$$
    In other words, we only need to bound $\|J_i(\omega)\|,|r_i(\omega)|$ and $\|\nabla^2_\omega \phi(x_i;\omega,c)\|$ in order to show the proposition. 
    \begin{itemize}
        \item We start off by bounding $\|J_i(\omega)\|^2$:
        \begin{align*}
            \|J_i(\omega)\|^2&= \sum_{j=1}^m\|\frac{c_j}{\sqrt{m}}\sigma'(\omega_j^T x_i)x_i\|^2\\
            &=\frac{\|x_i\|^2}{m}\sum_{j=1}^mc_j^2\sigma'(\omega_j^Tx_i)^2\\
            &\leq \frac{\|x_i\|^2}{m}\|c\|_2^2
        \end{align*}
        where we used $\forall x\in \bR:|\sigma'(x)|\leq 1$.
        \item The bound of $r_i$ follows immediately from  $$|r_i(\omega)|=|\alpha\phi(x_i;\omega,c)-y_i|\leq \alpha|\frac{1}{\sqrt{m}}\sum_{j=1}^m c_j\sigma(\omega_j^Tx)|+|y_i|\leq \frac{\alpha}{\sqrt{m}}\|c\|_1+|y_i|.$$
        \item Lastly, we still need to bound $\|\nabla^2_\omega \phi(x_i;\omega,c)\|$. We will do this by considering the blocks of the Hessian $\nabla^2_{\omega_j,\omega_k}\phi(x_i;\omega,c)$. First, we observe for $j\neq k$ 
        $$\nabla^2_{\omega_j,\omega_k}\phi(x_i;\omega,c)=D_{\omega_k}\left(\frac{1}{\sqrt{m}}c_j\sigma(\omega_j^Tx)x\right)\overset{j\neq k}{=}0,$$
        which means that this Hessian is a block-diagonal matrix. In addition, it holds for $j=k$
        $$\nabla^2_{\omega_j,\omega_j}(\phi(x_i;\omega,c))=\frac{1}{\sqrt{m}}c_j\sigma''(\omega_j^Tx_i)x_ix^T_i.$$
        Taking the norm of this equation, we receive
        $$\lambda_{max}\left(\nabla^2_{\omega_j,\omega_j}(\phi(x_i;\omega,c))\right)\leq \frac{4}{3\sqrt{3m}}c_j\|x_i\|,$$
        where we used $\forall x\in \bR:\|\sigma''(x)\|\leq \frac{4}{3\sqrt{3}}$.
    \end{itemize}
    Putting all of the above together, we get the upper bound
    \begin{align*}
        \lambda_{max}(\nabla_\omega^2R(\alpha\phi(x;\omega,c)))\leq \frac{\alpha^2\|c\|_2^2\|x\|_2^2}{mn}+\frac{4\alpha\|c\|_\infty}{3n\sqrt{3m}}\left(\frac{\alpha\|c\|_1}{\sqrt{m}}+\|y\|_\infty\right)\|x\|^2_2,
    \end{align*}
    which is in $\cO(\alpha^2)$ for a given dataset and predictor class. This implies that for $\eta_\alpha$ sufficiently small, the above proposition holds.
    \end{proof}
    The following proposition will rely on the above Lemma \ref{lemma:curgvature} to show that Assumption \ref{assumption:combined} holds for shallow neural networks trained on a finite training set with the mean squared error as the loss function. 
    \begin{proposition}\label{prop:shallow}
        In the above shallow neural network setting it holds that 
        $$Tr\left(\frac{\sigma_\alpha(\omega_s)^T\nabla_\omega^2R(\alpha h(\omega_s))\sigma_\alpha(\omega_s)}{\Bar{R}(\alpha h(\omega_s))}\right)\leq \frac{2\alpha^2\lambda^2m}{\eta_\alpha}.$$
        This result effectively justifying the application of Theorem \ref{th:ybound} to shallow neural networks.
    \end{proposition}
    \begin{proof}
   In the following, we will assume that the global minimizer does have zero residual risk, i.e. $R(h^\star)=0$. This implies that $\bar{R}(\alpha h(\omega_s))=R(\alpha h(\omega_s))$. By using the basic property of the Trace operator, we get 
        \begin{align*}
            Tr\left(\frac{\sigma_\alpha(\omega_s)^T\nabla_\omega^2R(\alpha h(\omega_s))\sigma_\alpha(\omega_s)}{R(\alpha h(\omega_s))}\right)&=Tr\left(\frac{\Sigma_\alpha(\omega_s)\nabla_\omega^2R(\alpha h(\omega_s))}{R(\alpha h(\omega_s))}\right)\\
            &\leq Tr\left(\frac{\Sigma_\alpha(\omega_s)}{R(\alpha h(\omega_s))}\right)Tr(\nabla_\omega^2R(\alpha h(\omega_s))).
        \end{align*}
        From Lemma \ref{lemma:curgvature} we know how to bound $Tr(\nabla_\omega^2R(\alpha h(\omega_s)))$. For the other trace operator, we can proceed as follows:
        \begin{align*}
            Tr\left(\frac{\Sigma_\alpha(\omega_s)}{R(\alpha h(\omega_s))}\right)&=Tr\left(\frac{\bE_\omega[\nabla_h\ell(x,\alpha h(\omega_s))\nabla_h^T\ell(x,\alpha h(\omega_s))]-\nabla_h R(\alpha h(\omega_s))\nabla_h^TR(\alpha h(\omega_s))}{R(\alpha h(\omega_s))}\right)\\
            &\leq Tr\left(\frac{\bE_\omega[\nabla_h\ell(x,\alpha h(\omega_s))\nabla_h^T\ell(x,\alpha h(\omega_s))]}{R(\alpha h(\omega_s))}\right)\\
            &=\bE_\omega\left[\frac{Tr(\nabla_h\ell(x,\alpha h(\omega_s))\nabla_h^T\ell(x,\alpha h(\omega_s)))}{R(\alpha h(\omega_s))}\right]\\
            &=\bE_\omega\left[\frac{\|\nabla_h\ell(x,\alpha h(\omega_s))\|_2^2}{R(\alpha h(\omega_s))}\right].
        \end{align*}
        Next we use, that $\ell(x,\alpha h(\omega_s))=(y-\alpha h(\omega_s))^2$, to get
        \begin{align*}
            \bE_\omega\left[\frac{\|\nabla_h\ell(x,\alpha h(\omega_s))\|_2^2}{R(\alpha h(\omega_s))}\right]&=4\bE_\omega\left[\frac{\|(y-\alpha h(\omega_s))\|_2^2}{R(\alpha h(\omega_s))}\right]=4\frac{R(\alpha h(\omega_s))}{R(\alpha h(\omega_s))} = 4.\\
        \end{align*}
        In total, we get 
        $$ Tr\left(\frac{\sigma_\alpha(\omega_s)^T\nabla_\omega^2R(\alpha h(\omega_s))\sigma_\alpha(\omega_s)}{R(\alpha h(\omega_s))}\right)\leq \cO(\alpha^2),$$
        which means that Proposition \ref{prop:shallow} holds for appropriate $\eta_\alpha$.
    \end{proof}
\section{Applicability Proofs for Deep Neural Networks}\label{appendix:dnn}
    The goal of this section is to provide evidence that the results from this publication can also be used for deep neural networks under reasonable assumptions. As already explained in Remark \ref{remark:DNNs}, we can circumvent the need of Lipschitz continuity of $Dh$ by using a different stopping time that allows an identical proof to Theorem \ref{th:ybound} by guaranteeing the positive definiteness of $\lambda_{min}(Dh(\omega_t)D^Th(\omega_t))$. Since we heavily rely on the results from \cite{du2019gradientdescentfindsglobal}, we will only consider the same class of deep neural networks as those presented in the original publication. Let $x^{(0)}\in \bR^d$ be the input, $W^{(1)}\in\bR^{m\times d}$ be the first weight matrix, $W^{(k)}\in\bR^{m\times m}$ the weight at the k-th layer, for layers $2\leq k\leq H$. $a\in\bR^m$ is the output layer weights and $\sigma(\cdot)$ is an activation function. Additionally, define the normalizing scaling factor $c_\sigma:=(\bE_{x\sim N(0,1)}[\sigma(x)^2])^{-1}$. Then we recursively define 
    \begin{align}
        x^{(k)}&=\sqrt{\frac{c_\sigma}{m}}\sigma(W^{(k)}x^{(k-1)}),  \text{ for } 1\leq k\leq H\label{eq:deepnn1}\\ 
        f(x,\omega&)=a^T x^{(H)},\label{eq:deepnn2}
    \end{align}
    and $x^{(0)}=x$ is the input of the neural network. We will present the results from \cite{du2019gradientdescentfindsglobal} in the following propositions. Although the original publication uses discrete time-steps, contrary to the continuous dynamics presented in this text, the proofs of Lemmas 1-4 in Appendix B never explicitly use the discrete time steps. Therefore, the proofs hold analogously in the continuous setting. We will only present Lemmas 3 and 4 in the following. The geometric series function $g_\alpha(n)=\sum_{i=0}^{n-1}\alpha^i$ will be used extensively. 
    \begin{proposition}[Bound on output difference; Lemma B.3 in \cite{du2019gradientdescentfindsglobal}]
        Suppose for every $k\in\{1,...,H\}$, $\|W^{(k)}(0)\|_2\leq c_{w,0}\sqrt{m}$, $\|x^{(k)}(0)\|_2\leq c_{x,0}$ and $\|W^{(k)}(t)-W^{(h)}(0)\|_\bF\leq \sqrt{m}R$ for constants $c_{w,0},c_{x,0}>0$ and $R\leq c_{w,0}$. If $\sigma(\cdot)$ is $L$-Lipschitz, we have 
        $$\left\|x^{(k)}(t)-x^{(k)}(0)\right\|_2\leq \sqrt{c_\sigma}Lc_{x,0}g_{c_x}(k)R,$$
        where $c_x=2\sqrt{c_\sigma}Lc_{w,0}$.
    \end{proposition}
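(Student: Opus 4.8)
The plan is to prove the bound by induction on the layer index $k$, following the recursive definition of the network. The key observation is that $\|x^{(k)}(t)-x^{(k)}(0)\|_2$ can be related to $\|x^{(k-1)}(t)-x^{(k-1)}(0)\|_2$ by peeling off a single layer, and the resulting linear recursion is exactly the one whose solution is the geometric sum $g_{c_x}$.

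For the base case $k=1$, since the input $x^{(0)}$ is held fixed, I would write $x^{(1)}(t)-x^{(1)}(0)=\sqrt{c_\sigma/m}\,[\sigma(W^{(1)}(t)x^{(0)})-\sigma(W^{(1)}(0)x^{(0)})]$, apply the $L$-Lipschitz continuity of $\sigma$, and then bound $\|W^{(1)}(t)-W^{(1)}(0)\|_2\leq\|W^{(1)}(t)-W^{(1)}(0)\|_F\leq\sqrt{m}R$ together with $\|x^{(0)}\|_2\leq c_{x,0}$. This gives exactly $\sqrt{c_\sigma}Lc_{x,0}R=\sqrt{c_\sigma}Lc_{x,0}g_{c_x}(1)R$, since $g_{c_x}(1)=1$.

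For the inductive step, assuming the bound at layer $k-1$, I would apply $L$-Lipschitz continuity to reduce to $\|W^{(k)}(t)x^{(k-1)}(t)-W^{(k)}(0)x^{(k-1)}(0)\|_2$, insert the cross term $W^{(k)}(t)x^{(k-1)}(0)$, and split by the triangle inequality into $\|W^{(k)}(t)\|_2\,\|x^{(k-1)}(t)-x^{(k-1)}(0)\|_2+\|W^{(k)}(t)-W^{(k)}(0)\|_2\,\|x^{(k-1)}(0)\|_2$. The first factor is controlled by $\|W^{(k)}(t)\|_2\leq\|W^{(k)}(0)\|_2+\sqrt{m}R\leq\sqrt{m}(c_{w,0}+R)\leq 2\sqrt{m}c_{w,0}$, where the last inequality uses the hypothesis $R\leq c_{w,0}$; this is precisely where the constant $c_x=2\sqrt{c_\sigma}Lc_{w,0}$ is produced. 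Substituting the inductive hypothesis into the first term, bounding the second term by $\sqrt{m}Rc_{x,0}$, and multiplying through by $\sqrt{c_\sigma/m}\,L$ collapses the prefactor to $\sqrt{c_\sigma}Lc_{x,0}R\,[\,c_x\,g_{c_x}(k-1)+1\,]$. The proof then closes via the elementary identity $c_x\,g_{c_x}(k-1)+1=g_{c_x}(k)$, immediate from the definition of the geometric sum.

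The Lipschitz and norm estimates are routine; the only step requiring genuine care is the operator-norm bound on $W^{(k)}(t)$, since it is there that the assumption $R\leq c_{w,0}$ is invoked to absorb the perturbation into the factor $2c_{w,0}$ and yield the clean geometric base $c_x$. I expect no real obstacle beyond bookkeeping: the continuous-time nature of $W^{(k)}(t)$ is irrelevant, as the argument uses only the static perturbation bound $\|W^{(k)}(t)-W^{(k)}(0)\|_F\leq\sqrt{m}R$ at the fixed time $t$, which is exactly why the discrete-time proof of \cite{du2019gradientdescentfindsglobal} carries over verbatim.
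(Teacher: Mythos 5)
Your proof is correct and is essentially identical to the argument this paper defers to (the paper's ``proof'' is just a citation to Lemma B.3 in Appendix B of \cite{du2019gradientdescentfindsglobal}): the same layer-wise induction, the same cross-term splitting via $W^{(k)}(t)x^{(k-1)}(0)$, the same operator-norm bound $\|W^{(k)}(t)\|_2\leq(c_{w,0}+R)\sqrt{m}\leq 2c_{w,0}\sqrt{m}$ invoking $R\leq c_{w,0}$, and the same closing identity $c_x\,g_{c_x}(k-1)+1=g_{c_x}(k)$. Your observation that the argument uses only the static perturbation bound at a fixed time $t$, so the discrete-time proof carries over verbatim to the continuous setting, is precisely the point the paper itself makes when importing this lemma.
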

    \begin{proof}
        See \cite{du2019gradientdescentfindsglobal} Appendix B.
    \end{proof}
    The condition that we still need to prove is $\lambda_{min}(D^Th(\omega_t)Dh(\omega_t))\geq \lambda$. To do this, we will look at the Gram matrix $G(k)\in\bR^{n\times n}$, defined by 
    \begin{align*}
        G_{i,j}(s)&=\left\langle\frac{\partial f(\omega_s,x_i)}{\partial\omega_s},\frac{\partial f(\omega_s,x_j)}{\partial\omega_s}\right\rangle\\
        &=\sum_{k=1}^H\left\langle\frac{\partial f(\omega_s,x_i)}{\partial W^{(k)}_s},\frac{\partial f(\omega_s,x_j)}{\partial W^{(k)}_s}\right\rangle+\left\langle\frac{\partial f(\omega_s,x_i)}{\partial a_s},\frac{\partial f(\omega_s,x_j)}{\partial a_s}\right\rangle\\\
        &=: \sum_{k=1}^{H+1}G_{i,j}^{(k)}(s).
    \end{align*}
    Each $G^{(k)}(s)$ is a positive semi-definite matrix. Therefore, we can bound the minimal Eigenvalue by only considering $G^{(H)}(s)$. 
    \begin{proposition}[Lemma B.4 in \cite{du2019gradientdescentfindsglobal}]
        Suppose $\sigma(\cdot)$ is $L$-Lipschitz and $\beta$-smooth. Suppose for $k\in\{1,...,H\}$, $\|W^{(k)}(0)\|_2\leq c_{w,0}\sqrt{m}$,$\|a(0)\|_2\leq a_{2,0}\sqrt{m}$,$\|a(0)\|_4\leq a_{4,0}m^{1/4}$, $\frac{1}{c_{x,0}}\leq \|x^{(k)}(0)\|_2\leq c_{x,0}$, if $\|W^{(k)}(s)-W^{(k)}(0)\|_\bF,\|a(s)-a(0)\|_2\leq \sqrt{m}R$ where $R\leq c g_{c_x}(H)^{-1}\lambda n^{-1}$ and $R\leq c g_{cx}(H)^{-1}$ for some small constant $c$ and $c_x=2\sqrt{c_\sigma}Lc_{w,0}$, we have 
        $$\|G^{(H)}(s)-G^{(H)}(0)\|_2\leq \frac{\lambda}{4}.$$
    \end{proposition}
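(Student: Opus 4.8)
The plan is to reduce the operator-norm perturbation to an entrywise estimate and then track how each factor of the Gram entry moves as the weights drift by at most $\sqrt{m}R$. First I would compute the gradient of $f$ with respect to the top weight matrix explicitly. Writing $z^{(H)}(x):=W^{(H)}x^{(H-1)}$, the chain rule gives $\partial f/\partial W^{(H)}=\sqrt{c_\sigma/m}\,\big(a\odot\sigma'(z^{(H)})\big)\,(x^{(H-1)})^\top$, a rank-one matrix. Taking the Frobenius inner product of two such gradients factors the Gram entry as
$$G^{(H)}_{ij}(s)=\frac{c_\sigma}{m}\Big(\sum_{\ell=1}^m a_\ell^2\,\sigma'(z^{(H)}_{i,\ell})\,\sigma'(z^{(H)}_{j,\ell})\Big)\big\langle x_i^{(H-1)},x_j^{(H-1)}\big\rangle,$$
with all quantities evaluated at time $s$. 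This product structure is the key device: the perturbation of $G^{(H)}_{ij}$ splits into the perturbation of the inner product $\langle x_i^{(H-1)},x_j^{(H-1)}\rangle$ and of the $a$-weighted correlation of activation derivatives.

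Next I would bound each factor and its displacement via the telescoping identity $A(s)B(s)-A(0)B(0)=(A(s)-A(0))B(s)+A(0)(B(s)-B(0))$. For the inner-product factor, the previous proposition (Lemma B.3) already controls $\|x_i^{(H-1)}(s)-x_i^{(H-1)}(0)\|_2\le\sqrt{c_\sigma}Lc_{x,0}\,g_{c_x}(H-1)R$, so its change is $O(g_{c_x}(H)R)$ while the factor itself is $O(1)$ by the $c_{x,0}$ bounds. For the activation-derivative factor I would use $\beta$-smoothness of $\sigma$ to obtain $|\sigma'(z_{i,\ell}(s))-\sigma'(z_{i,\ell}(0))|\le\beta\,|z_{i,\ell}(s)-z_{i,\ell}(0)|$, and then control $\|z_i^{(H)}(s)-z_i^{(H)}(0)\|_2$ through the drift of $W^{(H)}$ (bounded by $\sqrt{m}R$ in Frobenius, hence in operator norm) times $\|x^{(H-1)}\|_2=O(1)$, together with the drift of $x^{(H-1)}$ from Lemma B.3. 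The displacement of the output layer enters through $\|a(s)-a(0)\|_2\le\sqrt{m}R$.

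After collecting these estimates, I would apply Cauchy--Schwarz to the critical sum in the form $\sum_\ell a_\ell^2\,|z_{i,\ell}(s)-z_{i,\ell}(0)|\le\|a\|_4^2\,\|z_i^{(H)}(s)-z_i^{(H)}(0)\|_2$. Here the hypotheses $\|a(0)\|_2\le a_{2,0}\sqrt m$ and $\|a(0)\|_4\le a_{4,0}m^{1/4}$ are exactly what make the $1/m$ normalization cancel the width: $\tfrac{1}{m}\cdot\|a\|_4^2\cdot\|z\text{-drift}\|_2=O\!\big(\tfrac{1}{m}\cdot\sqrt m\cdot\sqrt m R\big)=O(R)$. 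Thus each entry obeys $|G^{(H)}_{ij}(s)-G^{(H)}_{ij}(0)|\le C\,g_{c_x}(H)R$ for an absolute constant $C$ depending only on $L,\beta$ and the $c_{\cdot,0},a_{\cdot,0}$ parameters. I would then pass from entries to operator norm through $\|G^{(H)}(s)-G^{(H)}(0)\|_2\le\|G^{(H)}(s)-G^{(H)}(0)\|_F\le n\max_{i,j}|G^{(H)}_{ij}(s)-G^{(H)}_{ij}(0)|$, giving a bound of the form $C'n\,g_{c_x}(H)R$. The hypothesis $R\le c\,g_{c_x}(H)^{-1}\lambda n^{-1}$ with a small enough constant $c$ then forces this to be at most $\lambda/4$.

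The hard part will be the activation-derivative factor: it requires bounding the change in the pre-activations $z_i^{(H)}$ uniformly enough that, after the $a$-weighting and the $1/m$ normalization, the accumulated error stays $O(R)$ rather than scaling with the width $m$. This is precisely where the fourth-moment control $\|a(0)\|_4\le a_{4,0}m^{1/4}$ and the careful propagation of the geometric factor $g_{c_x}(\cdot)$ through the layer recursion are indispensable; arranging the $m$-dependence to cancel so that the final bound depends on $R$, $n$, and $H$ but not on $m$ is the delicate bookkeeping at the heart of the argument.
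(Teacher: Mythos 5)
Your proposal is correct and takes essentially the same route as the proof this paper relies on: the paper does not reprove the statement but defers entirely to Appendix B of \cite{du2019gradientdescentfindsglobal}, and your argument --- the rank-one factorization of $\partial f/\partial W^{(H)}$, the telescoping product decomposition, $\beta$-smoothness combined with the Cauchy--Schwarz step $\frac{1}{m}\|a\|_4^2\|z\text{-drift}\|_2 = O(R)$ to cancel the width, and the entrywise-to-operator-norm passage costing a factor $n$ absorbed by $R\le c\,g_{c_x}(H)^{-1}\lambda n^{-1}$ --- is precisely the argument given there. No gaps worth flagging.
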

    \begin{proof}
        See \cite{du2019gradientdescentfindsglobal} Appendix B.
    \end{proof}
    Under the assumption that $\lambda_{min}(D^Th(\omega_0)Dh(\omega_0))\geq \lambda$, this gives us the needed result to define a new stopping time on the neural network weights. Instead of considering
    $$\tau:=\inf\{t\geq 0:\|\omega_t-\omega_0\|>\lambda/\text{Lip}(Dh)\},$$
    we need to consider the stopping time $\tau:=\inf_{k\in\{0,...,H\}}\tau^{(k)}$ with
    $$\tau^{(k)}=\begin{cases}
        \inf\{t\geq 0:\|W^{(k)}(t)-W^{(k)}(0)\|_\bF>\sqrt{m}R\}, &  \text{if} \ k>0\\
        \inf\{t\geq 0:\|a(t)-a(0)\|_2>\sqrt{m}R\}, &  \text{if} \ k=0.
    \end{cases}$$
    By using this stopping time, we can derive a similar result to Theorem \ref{th:ybound} for deep neural networks. The following lemma will show that Assumption \ref{assumption:combined} is realistic for deep neural networks, by proving its validity for supervised learning with the mean squared loss function. 
    \begin{lemma}\label{lemma:dnn}
        We consider the deep neural network defined in Equations \ref{eq:deepnn1} and  \ref{eq:deepnn2} with $\sigma=\tanh$ as its activation function. Additionally, we consider the loss function 
        $$\ell(x,h)=(y-h(x))^2.$$
        Then Assumption \ref{assumption:combined} holds, i.e., 
        $$Tr\left(\frac{\sigma_\alpha(\omega)^T\nabla_\omega^2R(\alpha h(\omega))\sigma_\alpha(\omega)}{\Bar{R}(\alpha h(\omega))}\right)\leq \frac{2\alpha^2\lambda^2m}{\eta_\alpha},$$
        for all $\omega\in B_r(\omega_0)$. 
    \end{lemma}
    \begin{proof}
        In Appendix \ref{appendix:snn} we have already shown that this loss function and the expected loss $R(h)$ fulfill Assumption \ref{assumption:l} (Proposition \ref{prop:LipschitzExample}). Assumption \ref{assumption:interchangeability} follows again, since we consider a discrete probability measure on the finite set of training data. Similar to Proposition \ref{prop:shallow} to get
        \begin{align*}
            Tr\left(\frac{\sigma_\alpha(\omega)^T\nabla_\omega^2R(\alpha h(\omega))\sigma_\alpha(\omega)}{\Bar{R}(\alpha h(\omega))}\right)\leq Tr\left(\frac{\Sigma_\alpha(\omega)}{\Bar{R}(\alpha h(\omega))}\right) Tr\left(\nabla_\omega^2R(\alpha h(\omega))\right).
        \end{align*}
        We can use the same argument to bound the first trace operator as we did in Proposition \ref{prop:shallow}. Additionally, we will bound the trace of the hessian as follows:
        \begin{align*}
            \nabla_\omega^2 R(\alpha h(\omega))&=\nabla_\omega^2\frac{1}{n}\sum_{i=1}^n \ell(x_i,\alpha h(\omega))\\
            &=\frac{1}{n}\sum_{i=1}^n\nabla^2_\omega \ell(x_i,\alpha h(\omega)),
        \end{align*}
        where we used that we only consider the expectation over a finite set of data points $\{(x_i,y_i)\}_{i=1}^n$. It holds that 
        \begin{align*}
            \nabla^2_\omega \ell(x,\alpha h(\omega))&=D(\nabla_\omega(y-\alpha h(\omega)(x))^2)\\
            &=D(2(y-\alpha h(\omega)(x))(-\alpha\nabla_\omega h(\omega)(x)))\\
            &=2(\alpha^2\nabla_\omega h(\omega)(x)\nabla^Th(\omega)(x)+\alpha(\alpha h(\omega)(x)-y)\nabla^2_\omega h(\omega)(x)).
        \end{align*}
        Taking the trace operator, we get
        \begin{align*}
            Tr(\nabla^2_\omega R(\alpha h(\omega)))\leq \frac{2}{n}\sum_{i=1}^n(\alpha^2\|\nabla_\omega h(\omega)\|_2^2+\alpha (\alpha h(\omega)(x)-y)Tr(\nabla_\omega^2h(\omega))).
        \end{align*}
        Next, we can easily show that this trace is in $\cO(\alpha^2)$, since 
        \begin{align*}
            |\alpha h(\omega)(x)-y|\leq \alpha |h(\omega)(x)|+|y|\leq \alpha \max_{x=x_1,...,x_n}|h(\omega)(x)|+\max_{y= y_1,...,y_n}|y|,
        \end{align*}
        and $\omega \mapsto h(\omega) (x)$ for a given $x$, is a $C^2$ mapping by construction, it takes a maximum value on $\bar{B}_r(\omega_0)$. By the same argument, $\nabla_\omega h(\omega)(x)$ and $\nabla^2_\omega h(\omega)(x)$ are bounded on $\bar{B}_r(\omega_0)$ for a given $x$. This implies that 
        \begin{align*}
            Tr(\nabla^2_\omega R(\alpha h(\omega)))\leq \frac{2}{n}\sum_{i=1}^n(\cO(\alpha^2)+\cO(\alpha^2))=\cO(\alpha^2),
        \end{align*}
        which proofs Lemma \ref{lemma:dnn} for sufficiently large $\eta_\alpha$. 
    \end{proof}
    \begin{remark}
        In the proof of Lemma \ref{lemma:dnn}, we used crude upper bounds to show the validity of Assumption \ref{assumption:combined}. For a given architecture, loss function and data distribution one can easily compute the exact hessian and find explicit upper bounds that result in direct bounds for minimal choices of $\frac{\alpha^2}{\eta_\alpha}$.
    \end{remark}

\end{document}